\title[Exponentiated Gradient Meets Gradient Descent]{Exponentiated Gradient \st{vs.} Meets Gradient Descent}
\renewcommand{\cite}{\citep}
\begin{document}

    \maketitle

    \begin{abstract}
The (stochastic) gradient descent and the multiplicative update method are
probably the most popular algorithms in machine learning. We introduce and
study a new regularization which provides a unification of the additive and
multiplicative updates. This regularization is derived from an hyperbolic
analogue of the entropy function, which we call hypentropy. It is motivated
by a natural extension of the multiplicative update to negative numbers.
The hypentropy has a natural spectral counterpart which we use to derive a
family of matrix-based updates that bridge gradient methods and the
multiplicative method for matrices. While the latter is only applicable to
positive semi-definite matrices, the spectral hypentropy method can
naturally be used with general rectangular matrices. We analyze the new
family of updates by deriving tight regret bounds. We study empirically the
applicability of the new update for settings such as multiclass learning, in
which the parameters constitute a general rectangular matrix.
\end{abstract}

%\eh{Elad: add discussion of improvement of online matrix prediction, and
%improved bounds for the matrix prediction in general over matrix MW} 
%\eh{check potential tighter regret bounds for online variance minimization,
%and online leading eigenvector problem} 

    \medskip

    \begin{keywords}%
        Online Convex Optimization, Gradient Descent,
        Exponentiated Gradient, Experimentation
    \end{keywords}

    \section{Introduction} \label{intro:sec}

Algorithms for online learning can morally be divided into two camps. On one
side is the additive gradient update. Additive gradient-based stochastic
methods are the most commonly used approach for learning the parameters of
shallow and deep models alike. On the other side stands the multiplicative
update method. It is somewhat less glamorous, nonetheless a fundamental
primitive in game theory and machine learning, and was rediscovered repeatedly
in a variety of algorithmic settings~\cite{MWU}.
Both additive and multiplicative updates can be seen as special cases of a more
general technique of learning with {\it regularization}. General frameworks for
regularization were developed in online learning, dubbed
Follow-The-Regularized-Leader and in convex optimization as the {\it Mirrored
Decent} algorithms, see more below.

Notable attempts were made to unify different regularization techniques, in
particular between the multiplicative and additive update
methods~\cite{kivinen1997exponentiated}. For example,
AdaGrad~\cite{duchi2011adaptive} stemmed from a theoretical study of learning
the best regularization in hindsight. As the name implies, the $p$-norm
update~\cite{grove2001general,Gentile2003} uses the squared $p$-norm of the parameters as a
regularization.
By varying the order of the norm between regret bounds that
are characteristic of additive and multiplicative updates.

We study a new, arguably more natural, family of regularization which
``interpolates'' between additive and multiplicative forms. We analyze its
performance both experimentally, and theoretically to obtain tight regret bounds
in the online learning paradigm. The motivation for this interpolation stems
from the extension of the multiplicative update to negative weights. Instead of
using the so called EG$\pm$ trick", a term coined by~\citet{MWpc},
which simulates arbitrary weights through duplication to positive and negative
components, we use a direct approach. To do so we introduce the hyperbolic
regularization with a single temperature-like hyperparameter. Varying the
hyperparameter yields regret bounds that translate between those akin to
additive, and multiplicative, update rules.

As a natural next step, we investigate the spectral analogue of the
hypentropy function. We show that the spectral hypentropy is strongly-convex
with respect to the Euclidean or trace norms, again as a function of the
single interpolation parameter. The spectral hypentropy yields updates that
can be viewed as interpolation between gradient descent rule and
matrix multiplicative update~\cite{tsuda2005matrix, arora2007combinatorial}.

The standard matrix multiplicative update rule applies only to positive
semi-definite matrices. Standard extensions to square and more general
matrices increase the dimensionality~\cite{hazan2012near}. Moreover, the the
regret bounds scale as $O(\sqrt{T \log (m+n)} $ for $m \times n$ matrices.
In contrast, the spectral hypentropy regularization is defined for
arbitrary, rectangular, matrices. Moreover, the hypentropy-based update in
better regret bounds of $O( \sqrt{T \log \min \{m,n\}}) $, matching the best
known bounds in~\citet{kakade2012regularization}.

\paragraph{Related work}
For background on the multiplicative updates method and its use in machine
learning and algorithmic design, see~\citet{MWU}. The matrix version of
multiplicative updates method was proposed in~\citet{tsuda2005matrix} and later
in~\citet{arora2007combinatorial}. The study of the interplay between additive
and multiplicative updates was initiated in the influential paper of
\citet{kivinen1997exponentiated}. Generalizations of multiplicative updates to
negative weights were studied in the context of the Winnow algorithm and mistake
bounds in \citet{warmuth2007winnowing, grove2001general}. The latter paper also
introduced the $p$-norm algorithm which was further developed
in~\citet{Gentile2003}. The generalization of the p-norm regularization to
matrices was studied in \citet{kakade2012regularization}.

\paragraph{Organization of paper}
$\HU$ and $\SHU$ are mirror descent algorithms using the hypentropy and
spectral hypentropy regularization functions defined in Sec.~\ref{div:sec} and
Sec.~\ref{matrix_div:sec} respectively. These sections explore the geometric
properties of these new regularization functions and provide regret analysis.
Experimental results which underscore the applicability of $\HU$ and $\SHU$ are described in
Sec.~\ref{experiments:sec}. A thorough description of mirror descent is given
for completeness in App.~\ref{appendex1:sec}. The view of $\EGpm$ as an
adaptive variant of $\HU$ is explored in App.~\ref{egpm_appendix:sec}.
    \section{Problem Setting} \label{setting:sec}

\paragraph{Notation.}
Vectors are denoted by bold-face letters, e.g. $\bw$. The zero vector and
the all ones vector are denoted by $\bzero$ and $\bone$ respectively. We
denote a ball of radius $1$ with respect to the $p$-norm in $\RR^d$ as
$B_{p} = \{\bx \in \RR^d: \|\bx\|_p \leq 1\}$. For simplicity of the
presentation in the sequel we assume that weights are confined to the unit
ball. Our results generalize straightforwardly to arbitrary radii.

Matrices are denoted by capitalized bold-face letters, e.g. $\bX$.
We denote the space of real matrices of size $m \times n$ as $\RR^{m\times n}$
and symmetric matrices of size $d \times d$ as $\SS^d$. For a matrix
$\bX\in \RR^{m \times n}$, we denote the vector of singular values
$\sigma(\bX) = (\sigma_1, \sigma_2, \dots ,\sigma_l)$ where
$\sigma_1 \geq \sigma_2 \geq \cdots \geq \sigma_l \geq 0$ and
$l = \min\{m,n\}$. Analogously, for $\bX\in \SS^d$, we denote the vector of
its eigenvalues as $\lambda(\bX) = (\lambda_1, \lambda_2, \dots ,\lambda_d)$.
We use $\|\bX\|_p \eqdef \|\sigma(\bX)\|_p$ to represent the Schatten $p$-norm
of a matrix, namely, the $p$-norm of the vector of eigenvalues. We refer to the
Schatten norm for $p=1$ as the trace-norm. Note that the notation of the
spectral norm of $\bX$ is $\|\bX\|_\infty$.
We denote the ball of radius $\tau$
with respect to the trace-norm as
$\btr{\tau} = \{\bX \in \RR^{m \times n} : \|\bX\|_1 \leq \tau\}$.
We also define the intersection of a ball and the positive orthant as
$B^{+}_{p} = B_{p} \cap \RR^d_{+}$. We use $(x)_+$ to denote $\max(x,0)$.
We denote by $\|\bx\|_{*}$ the dual norm of $\bx$,
$\|\bx\|_{*} \eqdef \sup\{\bz^\top\bx \, | \, \|\bz\|\leq 1\}$.

\paragraph{Online Convex Optimization.} In online convex
optimization~\cite{cesa2006prediction, hazan2016introduction,
shalev2012online}, a learner iteratively chooses a vector from a convex set
$\calK \subset \RR^d$. We denote the total number of rounds as $T$. In each
round, the learner commits to a choice $\bw_t \in \calK$. After committing to
this choice, a convex loss function $\ell_t: \calK \rightarrow \RR$ is
revealed and the learner incurs a loss $\ell_t(\bw_t)$. The most common
performance objective of an online learning algorithm $\calA$ is regret.
Regret is defined to be the total loss incurred by the algorithm with respect
to the loss of the best fixed single prediction found in hindsight. Formally,
the regret of a learning algorithm $\calA$ is defined as,
\begin{align*}
    \regret_T(\calA) &\eqdef
    \sup_{\ell_1 \dots \ell_t} \bigg \{\sum_{t=1}^T\ell_t(\bw_t) -
    \min_{\bw^{*}\in \calK}\sum_{t=1}^T\ell_t(\bw^{*}) \bigg\} ~.
\end{align*}

    \section{\Hypent Divergence}
\label{div:sec}

We begin by defining the $\beta$-hyperbolic entropy, denoted $\phi_{\beta}$.
\begin{definition}[Hyperbolic-Entropy]
    For all $\beta > 0$, let $\phi_{\beta} : \RR^{d} \rightarrow \RR$ be defined as,
    \begin{align*}
        \phi_{\beta}(\bx) =
        \sum_{i=1}^d \Big(x_i \arcsinh\Big(\frac{x_i}{\beta}\Big) - \sqrt{x^2_i +\beta^2}\Big)~.
    \end{align*}
\end{definition}
Alternatively, we can view $\phi_\beta(\bx)$ as the sum of scalar functions,
$\phi_{\beta}(\bx) = \sum_{i=1}^d \phi_\beta(x_i)$, each of which satisfies,\vspace{-8pt}
\begin{align}
    \label{second_deriv:eq}
    \phi''_\beta(x) = \frac{1}{\sqrt{x^2 + \beta^2}} ~.
\end{align}

For brevity and clarity, we use the shorthand \hypent for $\phi_{\beta}$. Given the
\hypent function, we derive its associated Bregman divergence,
the relative \hypent as,
\begin{align*}
    D^{\beta}_{\phi}\infdivx{\bx}{\by} =
    {}& \phi_{\beta}(\bx) - \phi_{\beta}(\by) - \ip{\nabla \phi_{\beta}(\by)}{\bx - \by}\\
    ={}& \sum_{i=1}^d \bigg[x_i(\arcsinh\Big(\frac{x_i}{\beta}\Big) - \arcsinh\Big(\frac{y_i}{\beta}\Big)) - \sqrt{x^2_i +\beta^2} + \sqrt{y^2_i +
    \beta^2}\bigg]~.
\end{align*}
As we vary $\beta$, the relative \hypent interpolates between the squared
Euclidean distance and the relative entropy. The potentials for these divergences
are sums of element-wise scalar functions, for simplicity we view them as
scalar functions. The interpolation properties of hypentropy
can be seen in Figure~\ref{div:fig}. As $\beta$ approaches $0$,
we see that $\nabla^2\phi_{\beta}$ approaches $\frac{1}{|x|}$. When working
only over the positive orthant, as is the case with entropic regularization,
the hypentropy second derivative converges to the second
derivative of the negative entropy. On the other hand, as
$\beta$ grows much larger than $x$, we see $\sqrt{\beta^2 + x^2} \approx \beta$.
Therefore, for larger $\beta$, $\nabla^2\phi_{\beta}$ is essentially a constant.
In this regime hypentropy behaves like a scaled squared euclidean distance.

\begin{figure}
    \scalebox{0.6}{\begin{tabular}[c]{||c| c| c| c||}
                       \hline
                       & Square & Entropy & Hypentropy \\ [0.5ex]
                       \hline
                       \hline
                       $\phi(x)$ & $\frac12 {x^2}$ & $x\log(x) -x$ & \makecell{ $x\arcsinh(\frac{x}{\beta})$\\ $- \sqrt{x^2 +\beta^2}$}\\
                       \hline
                       $\nabla\phi(x)$ & $x$ & $\log (x)$ & $\arcsinh(\frac{x}{\beta})$ \\
                       \hline
                       $\nabla^2\phi(x)$ & $1$ & $\frac{1}{x}$ & $\frac{1}{\sqrt{x^2 + \beta^2}}$ \\[1ex]
                       \hline
    \end{tabular}}
    $\vcenter{\hbox{\includegraphics[width=0.33\textwidth]{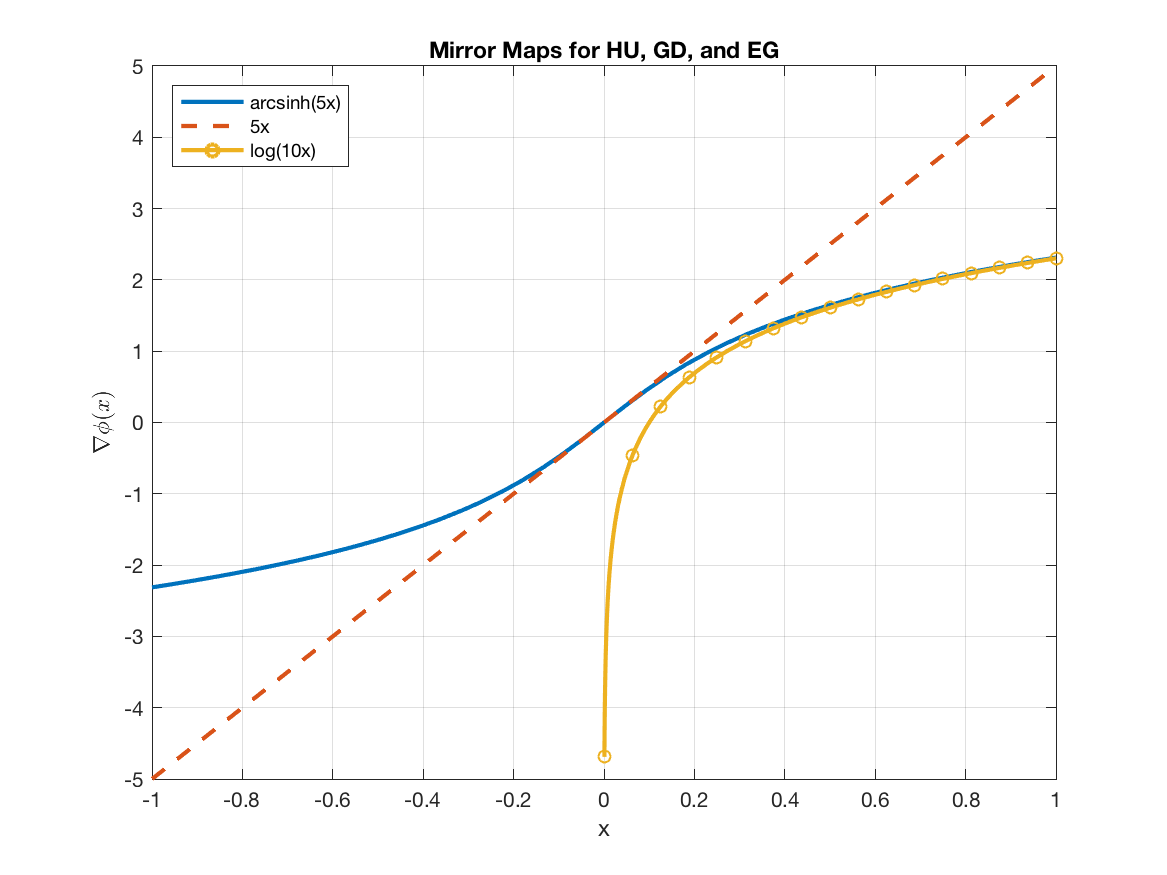}}}$
    $\vcenter{\hbox{\includegraphics[width=0.33\textwidth]{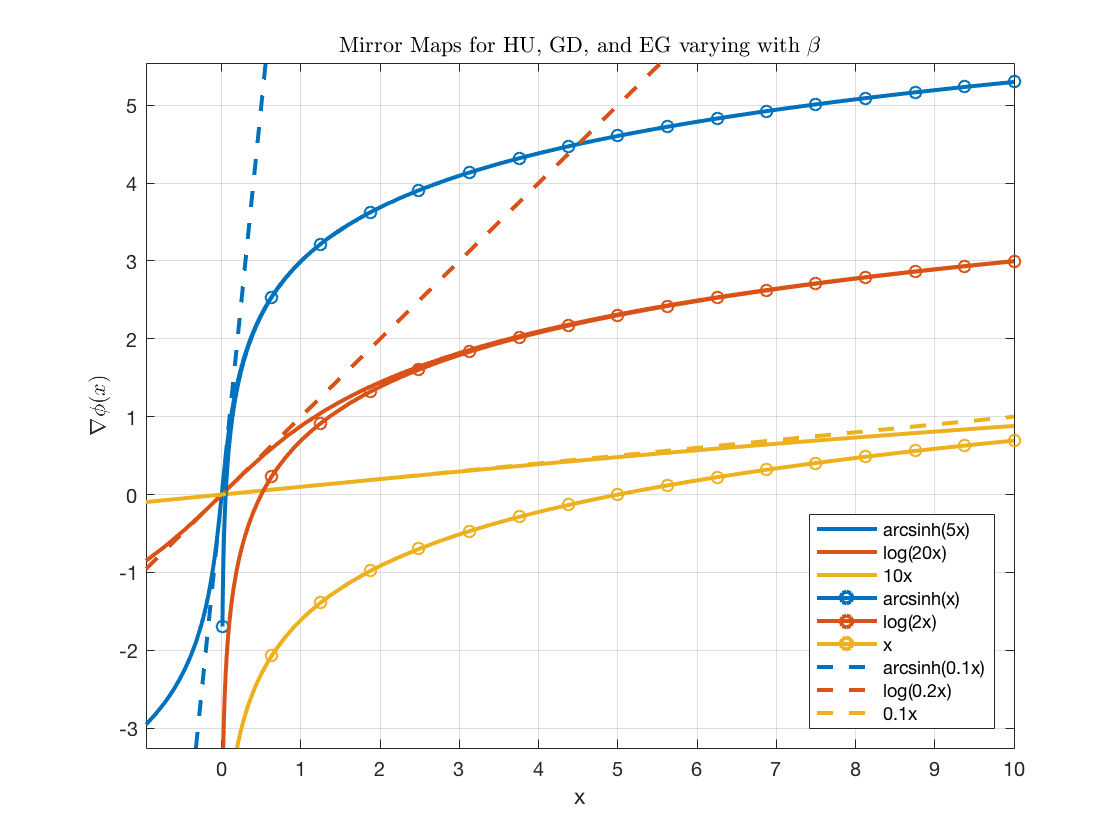}}}$
    \caption{Left:Classical and new divergences. Center: scalar versions of mirror maps used in $\HU(\beta=\frac{1}{5})$,
    GD, and EG Algorithms depicted in blue, red, and yellow respectively.
    EG is only defined for positive values of $x$. Right: $\beta$ is varied between
    $0.1, 1,$ and $10$ and is similarly depicted along with its linear and
    logarithmic limits. }
    \label{div:fig}
\end{figure}

From a mirror descent perspective of mirror descent (see Sec.~\ref{appendex1:sec}), it makes sense to look at the mirror map, the gradient
of the $\phi$ which defines the dual space where additive gradient updates
take place. Weights are mapped into the dual space via the mirror map
$\nabla \phi : \RR^d \rightarrow \RR^d$ and mapped back into the
primal space via $\nabla \phi^{*}$. Gradient-Descent (GD) can be framed as
mirror descent using the squared euclidean norm potential while
Exponentiated-Gradient (EG) amounts to mirror descent using the entropy
potential. The \Hypent Update (HU) uses the mirror
map $\nabla \phi_{\beta}$. As can be seen from Figure \ref{div:fig},
for sufficiently large weights, the \hypent mirror map behaves like
$\log(x)$, namely, the EG mirror map. In contrast, $\arcsinh(x/\beta)$ is
linear for small weights, and thus behave like GD. Large values of $\beta$
correspond to a slower transition from the linear regime to the logarithmic
regime of the mirror map.

The regret analysis of GD and EG depend on geometric properties of the
divergence related to the $2$-norm and $1$-norm respectively. Given this
connection, it is useful to analyze the properties of \hypent with respect
to both the $1$-norm and $2$-norm. Recall that a function $f$ is $\alpha$-strongly convex
with respect to a norm $\|\cdot\|$ on $\calK$ if,
$$\forall \bx, \by \in \calK, f(\bx) - f(\by) - \nabla f(\by)(\bx-\by)
\geq \frac{\alpha}{2} \| \bx-\by\|^2.$$
For convenience, we use the following second order characterization of
strong-convexity from Thm.~ 3~in~\cite{STRCVXNOTES}.
\begin{lemma}
    \label{second_strcvx:lemma}
    Let $\calK$ be a convex subset of some finite vector space $\calX$. A twice differentiable
    function $f : \calK \rightarrow \RR$ is $\alpha$-strongly convex with respect to a
    norm $\|\cdot\|$ iff
    \begin{align*}
        \inf_{\bx \in \calK,\by \in \calX: \|\by\| =1} \by^{\top}\,\nabla^2\phi(\bx)\,\by \geq \alpha ~.
    \end{align*}
\end{lemma}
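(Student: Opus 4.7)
The plan is to prove both implications via a second-order Taylor expansion of $f$, exploiting the homogeneity of the Hessian quadratic form to move freely between unit vectors $\by$ with $\|\by\|=1$ and arbitrary displacements in $\calX$. Denote the Hessian lower bound condition in the statement by $(\star)$. Note first that by $2$-homogeneity of $\bz \mapsto \bz^\top \nabla^2 f(\bx) \bz$ and $\bz \mapsto \|\bz\|^2$, condition $(\star)$ is equivalent to $\bz^\top \nabla^2 f(\bx) \bz \geq \alpha \|\bz\|^2$ for every $\bx \in \calK$ and every $\bz \in \calX$. I will use this equivalent form throughout.

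For the direction $(\star) \Rightarrow$ strong convexity, I would fix $\bx, \by \in \calK$ and apply Taylor's theorem with integral remainder to the restriction of $f$ to the segment $[\by, \bx] \subset \calK$ (which lies in $\calK$ by convexity):
\begin{align*}
f(\bx) - f(\by) - \nabla f(\by)^\top (\bx-\by) = \int_0^1 (1-t)\, (\bx-\by)^\top \nabla^2 f\bigl(\by + t(\bx-\by)\bigr) (\bx-\by)\, dt.
\end{align*}
Applying the homogeneous form of $(\star)$ to the integrand with $\bz = \bx - \by$, the integrand is at least $(1-t)\,\alpha\|\bx-\by\|^2$, and integrating against $\int_0^1 (1-t)\,dt = \tfrac12$ yields exactly the strong convexity inequality.

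For the converse, I would fix $\bx \in \calK$ and a direction $\by \in \calX$ with $\|\by\|=1$. If $\calK$ has nonempty interior around $\bx$, then $\bx + t\by \in \calK$ for all sufficiently small $t>0$; more generally one can restrict to $\bx$ in the relative interior, or take directions $\by$ in the affine hull of $\calK$, which suffices since the Hessian only needs to be controlled on feasible perturbations. The strong convexity hypothesis gives $f(\bx + t\by) - f(\bx) - t\,\nabla f(\bx)^\top \by \geq \tfrac{\alpha}{2}t^2$, while the second-order Taylor expansion gives $f(\bx+t\by) - f(\bx) - t\,\nabla f(\bx)^\top \by = \tfrac{t^2}{2} \by^\top \nabla^2 f(\bx)\by + o(t^2)$. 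Subtracting, dividing by $t^2$, and letting $t \downarrow 0$ yields $\by^\top \nabla^2 f(\bx) \by \geq \alpha$, which is $(\star)$.

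The only real subtlety, and the step I would flag as the main obstacle, is the feasibility issue in the converse: the lemma writes $\by \in \calX$ without restricting to directions tangent to $\calK$, so on a lower-dimensional $\calK$ the statement must be interpreted as an infimum restricted to directions along which one can legitimately perturb and still apply strong convexity. I would handle this by passing to the affine hull of $\calK$ and invoking the standard fact (as in the reference \cite{STRCVXNOTES}) that only in-hull directions matter for both strong convexity and the Hessian characterization; this matches the way the lemma is used in the sequel, where the domain is always full-dimensional (e.g.\ the unit ball in $\RR^d$).
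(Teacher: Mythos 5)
The paper does not actually prove this lemma---it imports it verbatim as Thm.~3 of the cited notes \cite{STRCVXNOTES}---so there is no in-paper argument to compare against. Your proof is the standard and correct one: the integral-remainder Taylor expansion along the segment $[\by,\bx]$ gives the Hessian-bound-to-strong-convexity direction, and the second-order expansion at $\bx$ with $t\downarrow 0$ gives the converse. You are also right to flag the only real delicacy, namely that for the converse the perturbation $\bx+t\by$ must stay in $\calK$, which forces an interpretation via interior (or relative-interior) points and directions in the affine hull; since every application in the paper takes $\calK$ to be a full-dimensional ball in $\RR^d$ and the Hessian of $\phi_\beta$ is continuous, this causes no loss.
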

We next prove elementary properties of $\phi_\beta$.
\begin{lemma}
    \label{strcvx_2:lemma} The function $\phi_\beta$ is
    $(1+ \beta)^{-1}$-strongly-convex over $B_2$ w.r.t the $2$-norm.
\end{lemma}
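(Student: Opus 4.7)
The plan is to invoke the second-order characterization of strong convexity provided by Lemma~\ref{second_strcvx:lemma} and reduce the claim to a uniform lower bound on the Hessian's diagonal entries.

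First I would observe that since $\phi_\beta(\bx) = \sum_{i=1}^d \phi_\beta(x_i)$ decomposes coordinatewise, its Hessian $\nabla^2\phi_\beta(\bx)$ is diagonal with entries $\phi_\beta''(x_i) = (x_i^2 + \beta^2)^{-1/2}$, exactly as recorded in equation~\eqref{second_deriv:eq}. Consequently, for any $\by$ with $\|\by\|_2 = 1$,
\begin{align*}
\by^\top \nabla^2 \phi_\beta(\bx)\,\by \;=\; \sum_{i=1}^d \frac{y_i^2}{\sqrt{x_i^2 + \beta^2}}.
\end{align*}

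Next I would bound this quantity from below uniformly over $\bx \in B_2$. Since $\|\bx\|_2 \leq 1$ forces $|x_i| \leq 1$ for every coordinate, one gets $\sqrt{x_i^2 + \beta^2} \leq \sqrt{1 + \beta^2}$. The elementary inequality $\sqrt{1+\beta^2} \leq 1 + \beta$ (valid for $\beta \geq 0$, since squaring gives $1 + \beta^2 \leq 1 + 2\beta + \beta^2$) then yields
\begin{align*}
\by^\top \nabla^2 \phi_\beta(\bx)\,\by \;\geq\; \frac{1}{1+\beta} \sum_{i=1}^d y_i^2 \;=\; \frac{1}{1+\beta}.
\end{align*}

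Applying Lemma~\ref{second_strcvx:lemma} with this lower bound concludes that $\phi_\beta$ is $(1+\beta)^{-1}$-strongly convex with respect to the $2$-norm on $B_2$. I do not anticipate a genuine obstacle: the argument is essentially a one-line Hessian computation followed by the trivial bound $|x_i|\le \|\bx\|_2\le 1$; the only cosmetic choice is whether to state the slightly sharper constant $(1+\beta^2)^{-1/2}$ or the weaker $(1+\beta)^{-1}$ advertised in the statement, and the latter is kept presumably because it makes later bookkeeping (e.g., when $\beta$ is small or large) more transparent.
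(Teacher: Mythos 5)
Your proposal is correct and matches the paper's own argument: both compute the diagonal Hessian with entries $(x_i^2+\beta^2)^{-1/2}$, bound each entry below using $|x_i|\le\|\bx\|_2\le 1$, and finish with $\sqrt{1+\beta^2}\le 1+\beta$. The paper phrases this as a bound on the smallest eigenvalue rather than writing out the quadratic form, but the content is identical.
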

\begin{proof}
    To prove the first part, note that from~\eqref{second_deriv:eq} we get that
    the Hessian is the diagonal matrix,
    \begin{align*}
        \nabla^2 \phi_{\beta}(\bx) = \diag \Big[\tfrac{1}{\sqrt{x_1^2 + \beta^2}}, \ldots  \tfrac{1}{\sqrt{x_d^2 + \beta^2}}\Big]
    \end{align*}
    Strong convexity follows from the diagonal structure of the Hessian, whose
    smallest eigenvalue is
    \begin{align*}
        \frac{1}{\sqrt{x^2 + \beta^2}} \geq \frac{1}{\sqrt{1+ \beta^2}} \geq \frac{1}{1+\beta} ~.
    \end{align*}
\end{proof}\vspace{-16pt}
\begin{lemma}
    \label{strcvx_1:lemma} The function $\phi_\beta$ is
    $(1 + \beta d)^{-1}$-strongly-convex over $B_1$ w.r.t. the $1$-norm.
\end{lemma}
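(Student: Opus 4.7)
The plan is to apply the second-order characterization of strong convexity from Lemma~\ref{second_strcvx:lemma}. Since the Hessian of $\phi_\beta$ was computed in the proof of Lemma~\ref{strcvx_2:lemma} to be the diagonal matrix with entries $1/\sqrt{x_i^2 + \beta^2}$, it suffices to lower-bound
\begin{align*}
\by^\top \nabla^2 \phi_\beta(\bx) \by \;=\; \sum_{i=1}^d \frac{y_i^2}{\sqrt{x_i^2 + \beta^2}}
\end{align*}
uniformly over $\bx \in B_1$ and $\by$ with $\|\by\|_1 = 1$, and show that this lower bound is $(1+\beta d)^{-1}$.

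The key trick is to apply Cauchy--Schwarz in the right way so that the target $1$-norm of $\by$ appears in the numerator. Writing $|y_i| = \frac{|y_i|}{(x_i^2+\beta^2)^{1/4}} \cdot (x_i^2+\beta^2)^{1/4}$ and applying Cauchy--Schwarz gives
\begin{align*}
1 \;=\; \|\by\|_1^2 \;\leq\; \Bigl(\sum_{i=1}^d \frac{y_i^2}{\sqrt{x_i^2+\beta^2}}\Bigr) \Bigl(\sum_{i=1}^d \sqrt{x_i^2+\beta^2}\Bigr),
\end{align*}
so it remains to upper-bound the second factor by $1 + \beta d$ uniformly over $B_1$.

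For that final step I would use the elementary scalar inequality $\sqrt{a^2 + b^2} \leq |a| + |b|$, which gives $\sqrt{x_i^2 + \beta^2} \leq |x_i| + \beta$ coordinatewise. Summing yields $\sum_i \sqrt{x_i^2+\beta^2} \leq \|\bx\|_1 + \beta d \leq 1 + \beta d$ since $\bx \in B_1$. Combining with the Cauchy--Schwarz bound produces $\by^\top \nabla^2 \phi_\beta(\bx)\by \geq (1+\beta d)^{-1}$, and the claim follows from Lemma~\ref{second_strcvx:lemma}.

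No step looks like a real obstacle; the only thing to get right is the particular splitting inside Cauchy--Schwarz (one must insert $(x_i^2+\beta^2)^{1/4}$ rather than, say, $\sqrt{x_i^2+\beta^2}$, so that the $y_i^2/\sqrt{x_i^2+\beta^2}$ term, which is the Hessian quadratic form, appears isolated on one side). Once that choice is made, the bound on $\sum_i \sqrt{x_i^2+\beta^2}$ on the unit $\ell_1$-ball is straightforward and tight up to the constant structure $1 + \beta d$ that matches the statement exactly.
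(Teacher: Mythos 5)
Your proof is correct and is essentially identical to the paper's argument: both apply the second-order criterion of Lemma~\ref{second_strcvx:lemma}, use the same Cauchy--Schwarz splitting to get $\|\by\|_1^2 \leq \bigl(\sum_i y_i^2/\sqrt{x_i^2+\beta^2}\bigr)\bigl(\sum_i \sqrt{x_i^2+\beta^2}\bigr)$, and then bound $\sum_i \sqrt{x_i^2+\beta^2} \leq \|\bx\|_1 + \beta d \leq 1+\beta d$ via $\sqrt{a^2+b^2}\leq |a|+|b|$. The only cosmetic difference is that the paper writes the Cauchy--Schwarz step by multiplying and dividing by $\sum_i\sqrt{\beta^2+x_i^2}$ rather than splitting $|y_i|$ explicitly.
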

\begin{proof}
    We work with the characterization provided in Lemma~\ref{second_strcvx:lemma},
    \begingroup
    \allowdisplaybreaks
    \begin{align*}
        \inf_{\bx\in B_1, \|\by\|_1 =1} & \by^{\top}\nabla^2\phi(\bx)\by ~ =
        \inf_{\bx\in B_1, \|\by\|_1 =1}\sum_{i=1}^d \frac{y_i^2}{\sqrt{ \beta^2 + x_i^2}} & \big[\mbox{Equation}~\eqref{second_deriv:eq}\big]\\
        & =\inf_{\bx\in B_1, \|\by\|_1 =1}
        \frac{1}{\sum_{i=1}^d \sqrt{ \beta^2 + x_i^2}} \,
        \bigg(\sum_{i=1}^d \frac{y_i^2}{\sqrt{ \beta^2 + x_i^2}}\bigg) \,
        \bigg(\sum_{i=1}^d \sqrt{ \beta^2 + x_i^2}\bigg) \\
        & \geq\inf_{\bx\in B_1, \|\by\|_1 =1}
        \frac{1}{\sum_{i=1}^d \sqrt{ \beta^2 + x_i^2}}
        \bigg(\sum_{i=1}^d \sqrt{y_i^2}\bigg)^2 & \big[\mbox{Cauchy- Schwarz}\big]\\
        & =\inf_{\bx\in B_1, \|\by\|_1 =1}\frac{1}{\sum_{i=1}^d \sqrt{ \beta^2 + x_i^2}}\|\by\|^2_1
        \geq\frac{1}{\sum_{i=1}^d (\beta + |x_i|)}\geq \frac{1}{1 + \beta d} ~.
    \end{align*}
    \endgroup
\end{proof}
We next introduce a generalized notion of diameter and use it to prove properties
of $\phi_\beta$.
\begin{definition}
    \label{diam:def}
    The diameter of a convex set $\calK$ with respect to $\phi$ is,
    $
    \diam_{\phi}(\calK) \eqdef \sup_{\bx \in \calK}
    D_{\phi}\infdivx{\bx}{\bzero} ~.
    $
\end{definition}

Whenever implied by the context we omit the potential $\phi$ from the diameter.
Before we consider two specific diameters below, we bound the diameter in general as follows,
\begin{align*}
    D^{\beta}_{\phi}\infdivx{\bx}{\bzero} &= \phi_{\beta}(\bx) -
    \phi_{\beta}(\bzero) \\
    &= \sum_{i=1}^d \Big(x_i \arcsinh({x_i}/{\beta}) - \sqrt{x^2_i +\beta^2}\Big) + \beta d\\
    &\leq \sum_{i=1}^d x_i \arcsinh({x_i}/{\beta})\\
    &= \sum_{i=1}^d |x_i| \log\left(\frac{1}{\beta}\left(\sqrt{x^2_i + \beta^2} + |x_i|\right)\right)
\end{align*}
Thus, without loss of generality, we can assume that $\bx$ lies in the
positive orthant. We next bound the diameter of $B_2$ as follows,
\begin{align}
    \diam(B_2)
    &\leq \sum_{i=1}^d x_i \log{\left(\frac{1}{\beta}\left(\sqrt{x^2_i +
    \beta^2} + x_i\right)\right)} \nonumber \\
    % &\leq \sum_{i=1}^d x_i \log{\Big(\frac{\beta + 2x_i}{\beta}\Big)} \nonumber
    &\leq \sum_{i=1}^d x_i \log{\Big( 1 + \frac{2x_i}{\beta}\Big)} \nonumber\\
    &\leq \sum_{i=1}^d \frac{2x^2_i}{\beta} = \frac{2\|\bx\|^2_2}{\beta} \leq \frac{2}{\beta} \label{l2_diameter:eq}~.
\end{align}
For $\beta\leq 1$ and $\bx\in B_1$ it holds that,
${\sqrt{x^2_i + \beta^2} + x_i} \leq \sqrt{2} + 1$.
Hence, for $\beta\leq 1$, we have
\begin{align}
    \label{l1_diameter:eq}
    \diam(B_1)
    &\leq \sum_{i=1}^d x_i \log{\Big(\frac{1+\sqrt{2}}{\beta}\Big)}
    =    \|\bx\|_1 \log{\Big(\frac{1+ \sqrt{2}}{\beta}\Big)}
    \leq \log{\Big(\frac{3}{\beta}\Big)}~.
\end{align}

\subsection{HU algorithm}

We next describe an OCO algorithm over a convex domain $\calK \subseteq \RR^d$.
%%%%%%%%%%%%%%%%%%%%%%%%%%%%%%%%%%%%%%%%%%%%%%%%%%%%%%%%%%%%%%%%%%%%%%%%%%%%%%%
\begin{algorithm2e}[ht]
    \label{hu:algorithm}
    \SetAlgoLined
    \SetKw{KwBy}{by}
    \KwIn{$\eta >0, \beta > 0$, convex domain $\calK \subseteq \RR^{d}$}
    Initialize weight vector $\bw^{1} = \bzero$\;
    \For{$i=1$ \KwTo $T$}{
    (a) Predict $\bw^{t}$ ~ ~
    (b) Incur loss $\ell_t(\bw_t)$ ~ ~
    (c) Calculate $\bg^t =  \nabla \ell_t(\bw^t)$ \;
    \smallskip
    Update:
    % $ \begin{align}\label{SHU_update:eq}
    $  \bw^{t+\frac{1}{2}} =
    \beta \sinh{\Big(\arcsinh{\Big(\frac{\bw^{t}}{\beta}\Big)} - \eta
    \bg^{t}}\Big) $\;
    % \end{align}
    \smallskip
    Project onto $\calK$:
    % \begin{align*}
    $\bw^{t+1} = \displaystyle
    \argmin_{\bv \in \calK}
    D^{\beta}_{\phi}\infdivx{\bv}{\bw^{t+\frac{1}{2}}} \label{HU_update:eq}$
    % \end{align*}
    }
    \caption{\Hypent Update (HU)}
\end{algorithm2e}
%%%%%%%%%%%%%%%%%%%%%%%%%%%%%%%%%%%%%%%%%%%%%%%%%%%%%%%%%%%%%%%%%%%%%%%%%%%%%%%
%The Hypentropy Update (HU) algorithm starts at $\bw = \bzero$, with a fixed
%learning rate $\eta$ and hyperparameter $\beta$ and performs the update,
%\begin{align}
%    \label{HU_update:eq}
%    w^{t+\frac{1}{2}}_i &=
%    \beta \sinh{\Big(\arcsinh{\Big(\frac{w^{t}_i}{\beta}\Big)} - \eta g^{t}_i}\Big)\\
%    \bw^{t+1} &= \argmin_{\bv \in \calK} D^{\beta}_{\phi}\infdivx{\bv}{\bw^{t+\frac{1}{2}}} ~,
%    \nonumber
%\end{align}
%where $\bg^t =  \nabla \ell_t(\bw^t)$.

$\HU$ is an instance of OMD with divergence $D^{\beta}_{\phi}\infdivx{\cdot\!}{\!\cdot}$.
We provide a simple regret analysis that follows directly from the geometric properties derived above.
The following theorem allows us to bound the regret of an OMD algorithm in terms of the diameter and strong convexity.

\begin{theorem}
    \label{omd:theorem}
    Assume that $R: \calK \rightarrow \RR$ is $\mu$-strongly convex in respect to a
    norm $\|\cdot\|$ whose dual norm is $\|\cdot\|_{*}$. Assume that the diameter
    of $\calK$ is bounded, $\diam_R(\calK) \leq D$. Last, assume that
    $\forall{}t,\\|\bg^{t}\|_{*}\leq{}G$, then the regret boound of $\HU$
    and learning rate $\eta = \sqrt{{2\mu D}/({TG^2})}$ satisfies
    \begin{align*}
        \regret_T \leq  2\sqrt{2 \mu^{-1} DTG^2}~.
    \end{align*}
\end{theorem}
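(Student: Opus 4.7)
The plan is to specialize the textbook Online Mirror Descent (OMD) regret analysis to the hypotheses at hand; since $\HU$ is an instance of OMD with regularizer $R$, only the three given ingredients (strong convexity, bounded diameter, bounded dual-norm gradient) enter. At a high level, I will bound each per-step loss by a telescoping Bregman term plus a ``stability'' term of order $\eta^2 G^2 / \mu$, sum, and then balance by choosing $\eta$ as stated.

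First I will use convexity of each $\ell_t$ to pass to linear surrogate losses, $\ell_t(\bw^t) - \ell_t(\bw^*) \leq \langle \bg^t, \bw^t - \bw^* \rangle$. The unprojected step satisfies the dual identity $\nabla R(\bw^t) - \nabla R(\bw^{t+\frac{1}{2}}) = \eta \bg^t$, which I will plug into the three-point identity for Bregman divergences to obtain
\begin{align*}
\eta \langle \bg^t, \bw^t - \bw^* \rangle = D_R(\bw^*, \bw^t) - D_R(\bw^*, \bw^{t+\frac{1}{2}}) + D_R(\bw^t, \bw^{t+\frac{1}{2}}).
\end{align*}
The projection step $\bw^{t+1} = \argmin_{\bv \in \calK} D_R(\bv, \bw^{t+\frac{1}{2}})$ is a Bregman projection, so the generalized Pythagorean inequality gives $D_R(\bw^*, \bw^{t+1}) \leq D_R(\bw^*, \bw^{t+\frac{1}{2}})$, which only strengthens the bound and lets the first two terms telescope when we sum.

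The per-step stability term $D_R(\bw^t, \bw^{t+\frac{1}{2}})$ is bounded using $\mu$-strong convexity of $R$. Combining the symmetric Bregman identity $D_R(\bw^t, \bw^{t+\frac{1}{2}}) + D_R(\bw^{t+\frac{1}{2}}, \bw^t) = \eta \langle \bg^t, \bw^t - \bw^{t+\frac{1}{2}} \rangle$ with H\"older's inequality, the strong-convexity lower bound $D_R(\bw^{t+\frac{1}{2}}, \bw^t) \geq \tfrac{\mu}{2}\|\bw^t - \bw^{t+\frac{1}{2}}\|^2$, and Young's inequality yields $D_R(\bw^t, \bw^{t+\frac{1}{2}}) \leq \eta^2 G^2 / (2\mu)$.

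Summing over $t$, using $\bw^1 = \bzero$ together with $\diam_R(\calK) \leq D$ to bound the leading Bregman term by $D$, produces
\begin{align*}
\regret_T \leq \frac{D}{\eta} + \frac{\eta T G^2}{2\mu}.
\end{align*}
Plugging in $\eta = \sqrt{2\mu D/(TG^2)}$ balances the two summands and gives the claimed rate, up to an absolute constant. The whole argument is routine; the only subtle point is the projection step, which needs $\calK$ to be convex so that the Bregman projection is well-defined and non-expansive to any point in $\calK$ in the Bregman sense --- this is what makes the telescoping valid even under projection.
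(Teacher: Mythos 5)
Your proposal follows essentially the same route as the paper: the paper proves a general OMD regret bound (Theorem~\ref{omd:theorem_general} in App.~\ref{appendex1:sec}) via convexity, the three-point lemma (Lemma~\ref{3pt:lemma}), the generalized Pythagorean theorem (Lemma~\ref{pythog:lemma}), telescoping, and then obtains Theorem~\ref{omd:theorem} by balancing $\eta$; your intermediate bound $\regret_T \leq D/\eta + \eta T G^2/(2\mu)$ and the choice of $\eta$ match the paper's exactly.

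The one place you deviate is in bounding the stability term, and this is where a small but genuine issue appears. You discard the term $D_{R}\infdivx{\bw^{t+1}}{\bw^{t+\frac{1}{2}}}$ supplied by the generalized Pythagorean inequality and instead bound $D_{R}\infdivx{\bw^t}{\bw^{t+\frac{1}{2}}}$ on its own, using the symmetrized Bregman identity together with the lower bound $D_{R}\infdivx{\bw^{t+\frac{1}{2}}}{\bw^t}\geq\tfrac{\mu}{2}\|\bw^t-\bw^{t+\frac{1}{2}}\|^2$. That lower bound invokes strong convexity of $R$ at the unprojected iterate $\bw^{t+\frac{1}{2}}$, which need not lie in $\calK$, whereas the hypothesis grants $\mu$-strong convexity only on $\calK$. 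This is not purely cosmetic for the intended application: Lemma~\ref{strcvx_1:lemma} establishes the modulus $(1+\beta d)^{-1}$ only over $B_1$, and the Hessian bound degrades outside the ball, so the unprojected point can sit where the claimed modulus fails. The paper's proof sidesteps this by retaining $D_{R}\infdivx{\bw^{t+1}}{\by^{t+1}}$ and bounding the difference $D_{R}\infdivx{\bw^t}{\by^{t+1}}-D_{R}\infdivx{\bw^{t+1}}{\by^{t+1}}$, which after expansion uses strong convexity only between $\bw^t$ and $\bw^{t+1}$, both of which are in $\calK$. Substituting that variant for your stability estimate (or strengthening the hypothesis to strong convexity on a set containing the unprojected iterates) closes the gap; the rest of your argument, including the telescoping under projection and the use of $\diam_R(\calK)\leq D$ at $\bw^1=\bzero$, is correct.
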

This follows from the more general Theorem~\ref{omd:theorem_general}.
We next provide regret bounds for $\HU$ over $B_1$ and $B_2$.

\medskip
\begin{theorem}[Additive Regret]
    \label{l2regret:theorem}
    Let $\bw \in B_2$ and assume that for all $t$, $\|\bg_t\|_2 \leq G_{2}$.
    Setting $\beta \geq 1$,
    $$\eta = \frac{1}{G_{2}}\sqrt{\frac{1}{\beta(\beta+1)T}}
    ~,~ \mbox{ yields } ~ ~ \regret_T(\HU) \leq 4G_2\sqrt{T} ~. $$
\end{theorem}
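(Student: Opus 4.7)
The proof is a direct application of the generic mirror-descent regret bound (Theorem~\ref{omd:theorem}) to the setting $\calK = B_2$ with regularizer $R = \phi_\beta$. The plan is simply to supply the three inputs that Theorem~\ref{omd:theorem} requires---a strong-convexity constant, a diameter, and a dual-norm bound on the gradients---all of which have already been established earlier in this section.

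First, Lemma~\ref{strcvx_2:lemma} provides the strong-convexity constant $\mu = (1+\beta)^{-1}$ for $\phi_\beta$ on $B_2$ with respect to the $2$-norm. Second, the calculation leading to equation~\eqref{l2_diameter:eq} furnishes the diameter estimate $\diam_{\phi_\beta}(B_2) \leq 2/\beta$. Third, the $2$-norm is self-dual, so the hypothesis $\|\bg_t\|_2 \leq G_2$ supplies $G = G_2$ as the dual-norm bound required by Theorem~\ref{omd:theorem}.

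With the triple $(\mu, D, G) = \bigl((1+\beta)^{-1},\, 2/\beta,\, G_2\bigr)$ in hand, I would plug these into the OMD upper bound $\regret_T \leq D/\eta + \eta T G^2/(2\mu)$ evaluated at the stated learning rate $\eta = G_2^{-1}\sqrt{1/(\beta(\beta+1)T)}$. The two terms collapse into an expression whose only $\beta$-dependence is through the factor $\sqrt{(1+\beta)/\beta}$, so the last step is to invoke the hypothesis $\beta \geq 1$, which gives $(1+\beta)/\beta = 1 + 1/\beta \leq 2$, and verify that the overall numerical constant is at most $4$.

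There is essentially no obstacle: the two nontrivial ingredients---strong convexity of $\phi_\beta$ on $B_2$, and the diameter estimate $2/\beta$---were the real work of the preceding subsection, and this theorem is the cash-out. The one small point requiring care is checking that the specific $\eta$ stated in the theorem (which differs from the raw optimizer of the OMD bound by a multiplicative constant) still yields the clean constant $4$ uniformly over all $\beta \geq 1$, rather than only in the large-$\beta$ limit where $\mu^{-1}D \to 2$.
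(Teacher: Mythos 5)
Your proposal is correct and follows the paper's own proof exactly: apply Theorem~\ref{omd:theorem} with the strong-convexity constant from Lemma~\ref{strcvx_2:lemma}, the diameter bound~\eqref{l2_diameter:eq}, and the self-duality of the $2$-norm, then use $\beta \geq 1$ to absorb $\sqrt{1+1/\beta} \leq \sqrt{2}$ into the constant. If anything you are slightly more careful than the paper, since plugging the stated $\eta$ directly into the bound of Theorem~\ref{omd:theorem_general} (as you propose) gives $\tfrac{5}{2}G_2\sqrt{(1+1/\beta)T} \leq 4G_2\sqrt{T}$, which cleanly verifies the constant.
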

\begin{proof}
    Applying Lemma~\ref{strcvx_2:lemma} and the diameter bounds from \eqref{l2_diameter:eq}
    to Theorem~\ref{omd:theorem} yields,
    \begin{align*}
        \regret_T \leq 2\sqrt{2(1 + \frac{1}{\beta})TG^2_2} \leq 4G_2\sqrt{T} ~.
    \end{align*}
    The final inequality follows from the condition that $\beta \geq 1$.
\end{proof}

\begin{theorem}[Multiplicative Regret]
    \label{l1regret:theorem}
    Let $\bw \in B_1$ and assume that for all $t$, $\|\bg_t\|_{\infty} \leq G_{\infty}$.
    Setting for $\beta \leq 1$
    $$\eta = \frac{1}{G_{\infty}}
    \sqrt{\frac{\log{\big(\tfrac{3}{\beta}\big)}}{2T(1+\beta d)}}
    ~,~ \mbox{ yields } ~ ~
    \regret_T(\HU) \leq 3G_{\infty} \sqrt{T (1+ \beta d)\log{(\tfrac{3}{\beta})}} ~. $$
\end{theorem}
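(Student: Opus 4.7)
The plan is to apply the general mirror descent regret bound (Theorem~\ref{omd:theorem}) with the $\ell_1/\ell_\infty$ analogues of the quantities used in the additive case, mirroring the proof structure of Theorem~\ref{l2regret:theorem} line by line.

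First, I would match up the ingredients required by Theorem~\ref{omd:theorem}. Since the dual of the $1$-norm is the $\infty$-norm, the hypothesis $\|\bg_t\|_\infty \le G_\infty$ supplies the dual-norm gradient bound $G = G_\infty$. Lemma~\ref{strcvx_1:lemma} provides strong convexity of $\phi_\beta$ over $B_1$ with respect to the $1$-norm with parameter $\mu = (1+\beta d)^{-1}$. Finally, inequality~\eqref{l1_diameter:eq} supplies the diameter bound $D = \log(3/\beta)$, and crucially this bound is exactly the one that requires the assumption $\beta \le 1$ in the theorem statement.

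Plugging $\mu$, $D$, and $G$ into the general prescription $\eta = \sqrt{2\mu D/(T G^2)}$ reproduces the learning rate stated in the theorem. Substituting into the general regret bound gives
\begin{align*}
\regret_T(\HU) \;\leq\; 2\sqrt{2\mu^{-1} D\, T G^2} \;=\; 2\sqrt{2\,(1+\beta d)\,\log(3/\beta)\,T}\; G_\infty.
\end{align*}
Since $2\sqrt{2} < 3$, this immediately yields the claimed bound $\regret_T(\HU) \le 3 G_\infty \sqrt{T(1+\beta d)\log(3/\beta)}$.

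There is essentially no real obstacle here: all the technical work has been frontloaded into Lemma~\ref{strcvx_1:lemma} (strong convexity w.r.t.\ the $1$-norm) and inequality~\eqref{l1_diameter:eq} (the $B_1$ diameter bound). The proof reduces to a clean specialization of Theorem~\ref{omd:theorem} to the $\ell_1/\ell_\infty$ dual pair, entirely parallel to the $\ell_2$ case. The only small cosmetic step is absorbing the numerical constant $2\sqrt{2}$ into $3$ to obtain the statement's form.
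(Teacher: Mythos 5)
Your proof is correct and follows essentially the paper's own argument verbatim: apply Lemma~\ref{strcvx_1:lemma} for strong convexity, the diameter bound \eqref{l1_diameter:eq} (which is where $\beta\le 1$ is used), and Theorem~\ref{omd:theorem} for the $\ell_1/\ell_\infty$ dual pair, then absorb $2\sqrt{2}$ into $3$. The one small quibble---plugging into $\eta=\sqrt{2\mu D/(TG^2)}$ actually gives $\frac{1}{G_\infty}\sqrt{2\log(3/\beta)/(T(1+\beta d))}$, which differs from the stated $\eta$ by a factor of $2$---is an inconsistency already present in the paper's theorem statement and does not affect the regret bound.
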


\begin{proof}
    Applying Lemma~\ref{strcvx_1:lemma} and the diameter bounds from \eqref{l1_diameter:eq} to Theorem~\ref{omd:theorem} yields,
    \begin{align*}
        \regret_T \leq 2G_{\infty} \sqrt{2T (1+ \beta d)\log{\big(\tfrac{3}{\beta}\big)}}\leq 3G_{\infty} \sqrt{T (1+ \beta d)\log{(\tfrac{3}{\beta})}}~.
    \end{align*}
\end{proof}\vspace{-32pt}
    \section{Spectral Hyperbolic Divergence} \label{matrix_div:sec}
In this section, the focus is on using \hypent as a spectral regularization
function. We show that the matrix version of $\HU$ is strongly convex with
respect to the trace norm. Our proof technique of strong convexity is a
roundabout for the matrix potential. The proof works by showing that the
conjugate potential function is smooth with respect to the spectral norm
(the dual of the trace norm). The duality of smoothness and strong convexity
is then used to show strong convexity.

\subsection{Matrix Functions}
We are concerned with potential functions that act on the singular values of a
matrix. For an even scalar function, $f:\RR \rightarrow \RR^{+}$, consider
the trace function,
\begin{align}
    \label{singular_func:eq}
    F(\bX) = (f \circ \sigma)(\bX) =
    \sum_{i=1}^d f(\sigma_i) =
    \Tr\left(f\left(\sqrt{\bX^{\top}\bX}\right)\right)~,
\end{align}
where we overload the notation for $f$ and denote $f(\bv) = \sum_{i=1}^d f(v_i)$.
For $\bX\in \SS^d$ we use
\begin{align}
    \label{singular_func_b:eq}
    F(\bX) = (f \circ \lambda)(\bX)= \Tr(f(\bX)) ~.
\end{align}
Here $f(\bX)$ represents the standard lifting of a scalar function to a square
matrix, where $f$ acts on the vector of eigenvalues, namely,
\begin{align*}
    \bX=\bU\diag\left[\lambda(\bX)\right]\bU^{\top} \quad\Rightarrow\quad
    f(\bX) = \bU \diag\left[f(\lambda(\bX))\right]\bU^{\top} ~.
\end{align*}

We also use the gradient of a trace-function in our analysis. The following
result from Thm.~14~in~\cite{strcvxdual} shows how to compute a gradient
using a singular value decomposition.
\begin{theorem}
    Let $\bX\in\RR^{m \times n}$ and $F: \RR^{m \times n} \rightarrow \RR^{+}$ be
    defined as above, then
    \begin{align*}
        \nabla F(\bX) = f'(\bX) ~ .
    \end{align*}
\end{theorem}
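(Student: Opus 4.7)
The plan is to reduce the rectangular statement to the (already known) symmetric case by exploiting the assumption that $f$ is even. Writing $h(t) \eqdef f(\sqrt{t})$ for $t \geq 0$, so that $f(x) = h(x^2)$, and using that the eigenvalues of $\bX^{\top}\bX$ are exactly the squared singular values $\sigma_i(\bX)^2$, I would rewrite
\begin{align*}
F(\bX) \;=\; \sum_{i=1}^{l} f(\sigma_i(\bX)) \;=\; \sum_{i=1}^{l} h(\sigma_i(\bX)^2) \;=\; \Tr\!\bigl(h(\bX^{\top}\bX)\bigr).
\end{align*}
This expresses $F$ as the composition of the symmetric trace function $G(\bY) = \Tr(h(\bY))$ on $\SS^n$ with the map $\bY(\bX) = \bX^{\top}\bX$, avoiding the matrix square root entirely.

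Next I would apply the chain rule. The symmetric case of the theorem (formula~\eqref{singular_func_b:eq}) gives $\nabla G(\bY) = h'(\bY)$, which can be obtained from the Daleckii--Krein derivative formula for Hermitian trace functions. The differential of $\bY(\bX)$ in direction $\bH$ is $\bH^{\top}\bX + \bX^{\top}\bH$, and since $h'(\bX^{\top}\bX)$ is symmetric, cyclicity of the trace collapses the two contributions into
\begin{align*}
DF(\bX)[\bH] \;=\; \Tr\!\bigl(h'(\bX^{\top}\bX)\,(\bH^{\top}\bX + \bX^{\top}\bH)\bigr) \;=\; 2\,\bigl\langle \bH,\; \bX\, h'(\bX^{\top}\bX)\bigr\rangle_{F},
\end{align*}
so $\nabla F(\bX) = 2\bX\, h'(\bX^{\top}\bX)$.

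The remaining step is a direct SVD verification that $2\bX\, h'(\bX^{\top}\bX)$ coincides with the SVD-based lifting $f'(\bX) = U\diag[f'(\sigma_i)]V^{\top}$ of the odd scalar function $f'$. Writing $\bX = U\Sigma V^{\top}$ gives $\bX^{\top}\bX = V\Sigma^{2}V^{\top}$, hence $h'(\bX^{\top}\bX) = V\diag[h'(\sigma_i^{2})]V^{\top}$, and multiplying out produces $2\bX\, h'(\bX^{\top}\bX) = U\diag[2\sigma_i\, h'(\sigma_i^{2})]V^{\top}$. The identity $f(x) = h(x^2)$ implies $f'(x) = 2x\, h'(x^2)$, so the diagonal entries are exactly $f'(\sigma_i)$, finishing the algebraic identification.

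The main obstacle is technical handling of degenerate spectra: at $\sigma_i = 0$ the reparametrization $h'(t) = f'(\sqrt{t})/(2\sqrt{t})$ is formally singular, and when singular values coincide the SVD factors are not unique. Both issues are benign because $f$ is even (hence $f'(0)=0$, and $2x\,h'(x^2)$ extends smoothly through zero provided $f$ is $C^{2}$) and because the SVD lifting $f'(\bX)$ is invariant under the choice of orthonormal bases within each singular subspace. A continuity/density argument first establishing the identity on matrices with distinct positive singular values, then extending by continuity, closes the proof.
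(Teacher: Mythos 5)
Your argument is correct, but note that the paper does not actually prove this statement: it is imported verbatim as Thm.~14 of the cited reference \cite{strcvxdual}, where it is established through the general theory of spectral functions of singular values (von Neumann/Lewis-style arguments, and the dilation $\bX \mapsto \bigl[\begin{smallmatrix} 0 & \bX \\ \bX^{\top} & 0\end{smallmatrix}\bigr]$ that this paper itself reuses later for strong convexity). Your route is genuinely different and more elementary: the substitution $h(t) \eqdef f(\sqrt{t})$ turns $F$ into the smooth composition $\Tr\bigl(h(\bX^{\top}\bX)\bigr)$, after which only the standard symmetric gradient formula $\nabla \Tr(h(\bY)) = h'(\bY)$ and the chain rule are needed; the identity $f'(x) = 2x\,h'(x^2)$ then recovers the SVD lifting. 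This buys a short, self-contained proof that avoids both the matrix square root and the symmetrization machinery, at the cost of requiring $f \in C^{2}$ near $0$ (so that $h$ is $C^{1}$ at $0$ with $h'(0) = f''(0)/2$) rather than mere differentiability --- harmless here since $\phi_{\beta}$ is smooth. Two small points worth making explicit: when $m \neq n$ the trace $\Tr\bigl(h(\bX^{\top}\bX)\bigr)$ differs from $\sum_{i=1}^{l} f(\sigma_i)$ by the constant $(n-l)f(0)$, which does not affect the gradient (the same constant already appears in the paper's own expression $\Tr\bigl(f(\sqrt{\bX^{\top}\bX})\bigr)$); and the symmetric formula $\nabla \Tr(h(\bY)) = h'(\bY)$ is itself the nontrivial ingredient, so your proof is only as self-contained as your justification of it --- the first-order eigenvalue perturbation argument on matrices with simple spectrum followed by the continuity/density extension you describe is the right way to close that, and the evenness of $f$ (hence $f'(0)=0$) correctly disposes of the zero singular values.
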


We also make use of the \emph{Fenchel} conjugate functions. Consider a convex
function $f:\calX \rightarrow \RR$ defined on a {\em finite} vector space $\calX$
endowed with an inner product $\ip{\cdot}{\cdot}$. The conjugate of $f$ is
defined as follows.

\begin{definition}
    The conjugate $f^{*}: \calX \rightarrow \RR$ of a convex function
    $f:\calX \rightarrow \RR$ is
    \begin{align*}
        f^{*}(\bz) = \sup_{\bx \in \calX} \ip{\bx}{\bz} - f(\bx) ~.
    \end{align*}
\end{definition}
In this section, we use the space of matrices (either $\RR^{m\times n}$ or
$\SS^d)$ with the inner product $\ip{\bX}{\bY} = \Tr(\bX^{\top}\bY)$. Thus,
the dual space of $\calX$ is $\calX$ itself and $f^*$ is defined over $\calX$.

\smallskip

We need to relate the conjugate of a trace function to that of a scalar
function. This is achieved by the following result, restated from
Thm.~12~\cite{strcvxdual}. The theorem implies that the conjugate of a
singular-values function is the singular-values function lifted from the
conjugate of the scalar function.
\begin{theorem}
    \label{matrix_conj:theorem}
    Let $F = (f \circ \sigma)$ be defined as in \eqref{singular_func:eq},
    then $F^{*} = (f^{*} \circ \sigma)$.
\end{theorem}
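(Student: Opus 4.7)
The plan is to use von Neumann's trace inequality together with the fact that $f$ is an even, separable scalar function to reduce the matrix-level supremum to a vector-level supremum over singular values, at which point the conjugate of the scalar function (lifted separably) appears directly.

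First, I would unfold the definition of $F^{*}(\bZ) = \sup_{\bX \in \RR^{m \times n}} \bigl[\Tr(\bX^{\top}\bZ) - f(\sigma(\bX))\bigr]$, split the supremum into an outer sup over the vector of singular values $\bs = \sigma(\bX) \in \RR_{+}^{l}$ (sorted in nonincreasing order) and an inner sup over all matrices $\bX$ sharing those singular values. Since $f \circ \sigma$ depends only on the singular-value spectrum, the inner term $f(\bs)$ factors out, leaving $\sup_{\bX : \sigma(\bX) = \bs} \Tr(\bX^{\top}\bZ)$. By von Neumann's trace inequality, this inner supremum equals $\ip{\bs}{\sigma(\bZ)}$, with equality attained when $\bX$ and $\bZ$ share the same left and right singular vector bases and $\bX$'s singular values are loaded onto $\bZ$'s singular directions. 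Thus $F^{*}(\bZ) = \sup_{\bs \in \RR_{+}^{l}, \text{sorted}} \bigl[\ip{\bs}{\sigma(\bZ)} - f(\bs)\bigr]$.

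Second, I would compare this to $f^{*}(\sigma(\bZ)) = \sup_{\bs \in \RR^{l}} \bigl[\ip{\bs}{\sigma(\bZ)} - f(\bs)\bigr]$ and show the two expressions agree. Here the assumption that $f$ is even is crucial: the separable lift $\bs \mapsto \sum_i f(s_i)$ is invariant under coordinatewise sign changes and coordinate permutations, while $\sigma(\bZ)$ is nonnegative and sorted. By the rearrangement inequality and the sign-flip symmetry, for any candidate $\bs$ we can replace each $s_i$ by $|s_i|$ and then permute into nonincreasing order without decreasing $\ip{\bs}{\sigma(\bZ)}$ and without changing $f(\bs)$. This shows the unconstrained scalar supremum is achieved on the sorted nonnegative cone, matching the expression obtained for $F^{*}(\bZ)$.

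The main obstacle, and the only nonelementary input, is invoking von Neumann's trace inequality and its equality condition; I would cite it rather than reprove it. A small bookkeeping subtlety is the mismatch between $\sigma$ being a vector of length $l = \min\{m,n\}$ while the scalar lift $f$ must be applied to a vector of the correct length, but this is handled by defining $f$ on $\RR^{l}$ via $f(\bv) = \sum_{i=1}^{l} f(v_i)$ as the paper already does. Once these pieces are in place, the chain $F^{*}(\bZ) = \sup_{\bs \geq 0, \text{sorted}} [\ip{\bs}{\sigma(\bZ)} - f(\bs)] = \sup_{\bs \in \RR^{l}} [\ip{\bs}{\sigma(\bZ)} - f(\bs)] = f^{*}(\sigma(\bZ))$ yields the theorem.
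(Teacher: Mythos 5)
Your proposal is correct, and it is essentially the canonical proof of this result: the paper itself does not prove Theorem~\ref{matrix_conj:theorem} but restates it from the cited reference, where the argument runs exactly as you describe --- von Neumann's trace inequality (with equality attained by aligning the singular bases) collapses the matrix supremum to a supremum over sorted nonnegative vectors, and the evenness and permutation-invariance of the separable lift of $f$ identify that constrained supremum with the unconstrained one defining $f^{*}\circ\sigma$. The two subtleties that could sink a sloppy version --- attainment of equality in von Neumann, and the passage from the sorted nonnegative cone to all of $\RR^{l}$ via sign flips and rearrangement against the nonincreasing, nonnegative vector $\sigma(\cdot)$ --- are both handled explicitly in your write-up, so there is no gap.
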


\subsection{Duality of Strong Convexity and Smoothness}
Recall that a function $f$ is $L$-smooth with respect to a
norm $\|\cdot\|$ on $\calK$ if,
\begin{align*}
    \forall \bx, \by \in \calK, ~~
    f(\bx) - f(\by) - \nabla f(\by)(\bx-\by) \leq \frac{L}{2} \| \bx-\by\|^2 ~.
\end{align*}

For convenience, we use the following second order characterization of
smoothness which is an analogue of Lemma~\ref{second_strcvx:lemma}.
\begin{lemma}
    \label{second_strsmooth:lemma}
    A twice differentiable function $f : \calX \rightarrow \RR$ is locally
    $L$-smooth with respect to $\|\cdot\|$ at $\bx$ iff
    \begin{align*}
        \sup_{\by \in \calX :\|\by\| =1} \by^{\top}\nabla^2\phi(\bx)\,\by \leq L~.
    \end{align*}
\end{lemma}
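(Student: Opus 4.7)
The plan is to mirror the proof of Lemma~\ref{second_strcvx:lemma} (the strong-convexity counterpart), exploiting the duality between the two characterizations: flipping the inequality directions converts a lower bound on the Hessian quadratic form into an upper bound, and likewise for the Taylor remainder. Since we only need a local statement at $\bx$, continuity of $\nabla^2 f$ in a neighborhood will provide the uniform control we need.

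For the ``$\Leftarrow$'' direction, I would start by invoking the hypothesis to obtain $\sup_{\|\by\|=1} \by^{\top}\nabla^2 f(\bx)\,\by \leq L$, and then use continuity of $\nabla^2 f$ to conclude that for some neighborhood $U$ of $\bx$ we have $\sup_{\|\by\|=1} \by^{\top}\nabla^2 f(\bz)\,\by \leq L + \varepsilon$ for every $\bz \in U$. For arbitrary $\bu, \bv \in U$ (with the segment contained in $U$), apply the integral form of Taylor's theorem,
\begin{align*}
f(\bu) - f(\bv) - \nabla f(\bv)^{\top}(\bu - \bv)
= \int_0^1 (1-t)\,(\bu - \bv)^{\top}\nabla^2 f\bigl(\bv + t(\bu-\bv)\bigr)(\bu - \bv)\,dt,
\end{align*}
and factor $\|\bu-\bv\|^2$ out of the quadratic form using the definition of operator norm with respect to $\|\cdot\|$. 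The pointwise bound on the Hessian quadratic form then yields the smoothness inequality with constant $L+\varepsilon$, and sending $\varepsilon \to 0$ closes the argument.

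For the ``$\Rightarrow$'' direction, I would fix a unit vector $\by$, apply the smoothness inequality at the pair $(\bx, \bx + t\by)$ for small $t > 0$, subtract $\nabla f(\bx)^{\top}(t\by)$, and expand the left-hand side via a second-order Taylor expansion of $f$ at $\bx$:
\begin{align*}
f(\bx + t\by) - f(\bx) - t\nabla f(\bx)^{\top}\by
= \tfrac{t^2}{2}\by^{\top}\nabla^2 f(\bx)\by + o(t^2).
\end{align*}
Dividing both sides of the smoothness inequality by $t^2/2$ and passing to the limit $t \to 0^+$ gives $\by^{\top}\nabla^2 f(\bx)\by \leq L$. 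Taking the supremum over unit $\by$ completes the proof.

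The only mildly delicate point is matching the meaning of ``locally $L$-smooth at $\bx$'' with the pointwise Hessian condition: in one direction a single-point Hessian bound must be extended to a neighborhood, which is handled by continuity of $\nabla^2 f$, and in the other a neighborhood smoothness inequality must be contracted to a single-point condition, which is handled by the Taylor limit. Beyond that, the proof is essentially the same arithmetic as for Lemma~\ref{second_strcvx:lemma} with the inequality reversed, and could alternatively be quoted directly from the smoothness analogue of Thm.~3 in~\cite{STRCVXNOTES}.
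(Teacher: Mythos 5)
The paper never proves this lemma: it is stated as the smoothness analogue of Lemma~\ref{second_strcvx:lemma}, which is itself quoted without proof from Thm.~3 of \cite{STRCVXNOTES}, so there is no in-paper argument to compare against. Your proof is the standard self-contained one and is correct in substance: the ``$\Rightarrow$'' direction via the second-order Taylor expansion at $\bx$ and the limit $t \to 0^+$ is exactly right, and the ``$\Leftarrow$'' direction via the integral form of the remainder and the operator-norm bound on the quadratic form is the usual route. Two small caveats, both of which you essentially flag yourself. First, your ``$\Leftarrow$'' direction needs continuity of $\nabla^2 f$ near $\bx$, while the lemma only assumes twice differentiability; with a genuinely discontinuous Hessian a pointwise bound at $\bx$ cannot control the two-point inequality on a neighborhood, so strictly speaking the lemma requires either that extra regularity or an infinitesimal reading of ``locally smooth at $\bx$.'' This is harmless where the paper uses it, since $\phi^{*}_{\beta}$ is built from $\cosh$ and is $C^{\infty}$. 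Second, sending $\varepsilon \to 0$ yields local $(L+\varepsilon)$-smoothness on $\varepsilon$-dependent neighborhoods rather than literal $L$-smoothness on a single fixed one; again this is immaterial for the downstream use, where only the value of the constant enters the regret bounds. With those readings fixed, your proof is a valid replacement for the external citation.
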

Strong convexity and smoothness are dual notions in the sense that $f$ is
$\alpha$-strongly convex with respect to a norm $\|\cdot\|$ iff its Fenchel
conjugate $f^{*}$ is ${\alpha}^{-1}$-smooth with respect to the dual norm
$\|\cdot\|_{*}$.

For the matrix variant of \hypent we find it easier to show smoothness of the
conjugate rather than strong convexity directly. Unfortunately, as we see in the
sequel, the conjugate function is not smooth everywhere. Therefore, we would
need a local variant of the duality of strong convexity and smoothness. In the
context of mirror descent with mirror map $\nabla \phi$, we show that $\phi$
is strongly convex over $\calK$ if $\phi^{*}$ is locally smooth at all points
within the image of the mirror map. Formally, we have the following lemma.
In the following we use the standard notation for image of vector functions,
$\nabla  f(S) = \{ y | \nabla f (x) =y , x \in S \} $,

\begin{lemma}[Local duality of smoothness and strong convexity]
    \label{local_dual_smooth:lemma}
    Let $\calK \subseteq \RR^d$ be an open convex set and $\|\cdot\|$ be a norm with dual norm $\|\cdot\|_{*}$. Let $\phi: \RR^d \rightarrow \RR$ be twice differentiable, closed and convex function. Suppose the Fenchel conjugate $\phi^{*}: \RR^d \rightarrow \RR$ is locally $L$ smooth with respect to $\|\cdot\|_{*}$ at all points in $\calC = \nabla \phi(\calK)$. Then, $\phi$ is
    $\frac{1}{L}$ strongly convex with respect to $\|\cdot\|$ over $\calK$.
\end{lemma}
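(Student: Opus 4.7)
The plan is to apply the second-order characterizations of Lemma~\ref{second_strcvx:lemma} and Lemma~\ref{second_strsmooth:lemma} together with the classical Hessian-inverse relation between $\phi$ and its conjugate $\phi^{*}$. I would pick an arbitrary $\bx \in \calK$ and set $\by = \nabla \phi(\bx) \in \calC$. Since $\phi^{*}$ is locally $L$-smooth at $\by$, Lemma~\ref{second_strsmooth:lemma} yields $\sup_{\|\bv\|_{*}=1}\bv^{\top}\nabla^{2}\phi^{*}(\by)\,\bv \leq L$; in particular $\phi^{*}$ is twice differentiable at $\by$ with a bounded positive semidefinite Hessian.

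Next I would invoke the classical Legendre--Fenchel identity $\nabla^{2}\phi^{*}(\nabla\phi(\bx))=[\nabla^{2}\phi(\bx)]^{-1}$, obtained by differentiating the duality relation $\nabla\phi^{*}(\nabla\phi(\bx))=\bx$ at $\by$ via the inverse function theorem. This identity forces $\nabla^{2}\phi(\bx)$ to be invertible, since the product of the two Hessians is the identity matrix, and it translates the local upper bound on $\nabla^{2}\phi^{*}(\by)$ into information about $\nabla^{2}\phi(\bx)$.

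The core quantitative step is a Cauchy--Schwarz argument in the energy inner product induced by $\nabla^{2}\phi(\bx)$. For every $\bv, \bw\in\RR^{d}$ we have $\ip{\bv}{\bw}^{2}\leq\big(\bv^{\top}\nabla^{2}\phi(\bx)\,\bv\big)\big(\bw^{\top}\nabla^{2}\phi^{*}(\by)\,\bw\big)$, since the right-hand side equals $\|\nabla^{2}\phi(\bx)^{1/2}\bv\|_{2}^{2}\cdot\|\nabla^{2}\phi(\bx)^{-1/2}\bw\|_{2}^{2}$. Taking the supremum over $\|\bw\|_{*}=1$ and using the dual-norm identity $\|\bv\|=\sup_{\|\bw\|_{*}=1}\ip{\bv}{\bw}$, one obtains $\|\bv\|^{2}\leq L\big(\bv^{\top}\nabla^{2}\phi(\bx)\,\bv\big)$ for every $\bv$. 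Rearranging yields $\inf_{\|\bv\|=1}\bv^{\top}\nabla^{2}\phi(\bx)\,\bv\geq 1/L$, and since $\bx\in\calK$ was arbitrary, Lemma~\ref{second_strcvx:lemma} concludes that $\phi$ is $1/L$-strongly convex over $\calK$ with respect to $\|\cdot\|$.

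The main obstacle is justifying the Hessian-inversion identity at the pointwise level under the restricted hypothesis that $\phi^{*}$ is smooth only on $\calC=\nabla\phi(\calK)$ rather than globally. One needs twice differentiability of both $\phi$ and $\phi^{*}$ at the paired points $\bx$ and $\by$ and must avoid appealing to any regularity of $\phi^{*}$ outside $\calC$; applying the inverse function theorem to the mirror map $\nabla\phi$ suffices, using that $\phi$ is closed and convex and that $\nabla^{2}\phi^{*}(\by)$ is a bounded PSD matrix at the specific image point $\by$. Once this local identity is established, the remaining steps are routine applications of Cauchy--Schwarz and the two second-order characterizations already stated in the paper.
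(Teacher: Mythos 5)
Your proposal is correct and follows the same overall skeleton as the paper's proof: fix $\bx\in\calK$, pass to $\by=\nabla\phi(\bx)\in\calC$, apply the second-order characterizations, and use the inverse-function-theorem identity $\nabla^{2}\phi^{*}(\nabla\phi(\bx))=[\nabla^{2}\phi(\bx)]^{-1}$. The one place where you genuinely diverge is the quantitative conversion step. The paper treats the smoothness bound $\tfrac12\by^{\top}\nabla^{2}\phi^{*}(\bx^{*})\by\leq\tfrac{L}{2}\|\by\|_{*}^{2}$ as an inequality between two convex functions and conjugates both sides, invoking the facts that Fenchel conjugation is order-reversing, that the conjugate of $\tfrac12\by^{\top}A\by$ is $\tfrac12\bz^{\top}A^{-1}\bz$, and that the conjugate of $\tfrac{L}{2}\|\cdot\|_{*}^{2}$ is $\tfrac{1}{2L}\|\cdot\|^{2}$. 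You instead prove the same implication directly via Cauchy--Schwarz in the energy inner product, $\ip{\bv}{\bw}^{2}\leq\bigl(\bv^{\top}\nabla^{2}\phi(\bx)\bv\bigr)\bigl(\bw^{\top}\nabla^{2}\phi^{*}(\by)\bw\bigr)$, followed by the dual-norm identity. The two are mathematically equivalent (your Cauchy--Schwarz computation is essentially the standard proof of the quadratic-form conjugacy facts the paper cites), but yours is more self-contained and makes explicit the positive-definiteness requirement on $\nabla^{2}\phi(\bx)$ that the paper's inversion step silently assumes; the paper's version is shorter at the cost of importing those conjugacy facts. Both arguments share the same residual delicacy, which you correctly flag: justifying twice differentiability of $\phi^{*}$ at $\by$ and the pointwise Hessian-inverse identity using only local smoothness of $\phi^{*}$ on $\calC$, which is handled by applying the inverse function theorem to the map $\nabla\phi$ itself.
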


\begin{proof}
    It suffices to show that for any $\bx \in \calK$, $\phi$ is locally
    $\frac{1}{L}$-strongly convex with respect to $\|\cdot\|$ at $\bx$ if
    $\phi^{*}$ is locally $L$-smooth at $\bx^{*} = \nabla \phi(\bx)$ with
    respect to $\|\cdot\|_{*}$. \\

    From local smoothness at $\bx^{*}$, we have for any $\by\in\calK$,
    \begin{align*}
        f(\by) =
        \frac{1}{2}\by^{\top} \nabla^2\phi^{*}(\bx^{*})\by
        \leq\frac{L}{2} \|\by\|^2_{*} ~.
    \end{align*}
    Taking the dual, which is order reversing, we have for any $\bz\in\calK$,
    \begin{equation}
        \label{dual_reverse:eq}
        f^{*}(\bz) =
        \frac{1}{2}\bz^{\top} [\nabla^2\phi^{*}(\bx^{*})]^{-1}\bz \geq
        \frac{1}{2L} \|\bz\|^2 ~.
    \end{equation}
    Since $\nabla\phi^* = (\nabla \phi)^{-1}$, then from the inverse function theorem,
    we have that
    \begin{align*}
        \nabla^2\phi^{*}(\bx^{*}) = [\nabla^2\phi(\bx)]^{-1}~.
    \end{align*}
    Using the above equality in \eqref{dual_reverse:eq}, we have for any $\bz\in\calK$,
    \begin{align*}
        \frac{1}{2}\bz^{\top} \nabla^2\phi(\bx)\bz \geq \frac{1}{2L} \|\bz\|^2~.
    \end{align*}
\end{proof}\vspace{-32pt}

\subsection{Strong Convexity of Spectral Hypentropy}
We now analyze the strong convexity of the spectral \hypent. The spectral
function $\Phi_{\beta}(\bX)$ is defined for $\bX\in\RR^{m \times n}$ by
\eqref{singular_func:eq} and for $\bX\in\SS^d$ by \eqref{singular_func_b:eq}
replacing $f$ with $\phi_\beta$. The main theorem of this subsection is as
follows.
\begin{theorem}
    \label{asymmetric_matrix_strcvx:theorem}
    The trace function $\Phi_{\beta}: \RR^{m \times n} \rightarrow \RR$ is
    $(2(\tau+ \beta \min\{m,n\}))^{-1}$-strongly convex with respect to the trace norm
    over $\btr{\tau}$.
\end{theorem}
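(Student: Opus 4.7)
The strategy is the dual approach already sketched at the start of Section~\ref{matrix_div:sec}: I will prove that the Fenchel conjugate $\Phi_\beta^*$ is $2(\tau + \beta \min\{m,n\})$-smooth with respect to the spectral norm at every point in the image $\mathcal{C} = \nabla \Phi_\beta(\btr{\tau})$, and then conclude by the local smoothness-to-strong-convexity equivalence of Lemma~\ref{local_dual_smooth:lemma}.

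First I would compute $\Phi_\beta^*$ using Theorem~\ref{matrix_conj:theorem}, which reduces the problem to the Legendre transform of the scalar hypentropy. Since $\phi_\beta'(x) = \arcsinh(x/\beta)$, inverting $y = \arcsinh(x/\beta)$ to $x = \beta \sinh(y)$ yields $\phi_\beta^*(y) = \beta \cosh(y)$, so $\Phi_\beta^*(\bY) = \beta \Tr(\cosh(\sqrt{\bY^\top \bY}))$ and $\nabla \Phi_\beta(\bX) = \arcsinh(\bX/\beta)$ in the matrix functional-calculus sense. Every $\bY \in \mathcal{C}$ therefore has singular values $\sigma_i(\bY) = \arcsinh(\sigma_i(\bX)/\beta)$ for some $\bX \in \btr{\tau}$, giving $\cosh(\sigma_i(\bY)) = \sqrt{1 + \sigma_i(\bX)^2/\beta^2} \leq 1 + \sigma_i(\bX)/\beta$ and the key uniform estimate $\sum_{i=1}^{\min\{m,n\}} \cosh(\sigma_i(\bY)) \leq \min\{m,n\} + \tau/\beta$.

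Next I would bound the directional Hessian $\nabla^2 \Phi_\beta^*(\bY)[\bH,\bH]$. The cleanest route goes through the Daleckii--Krein / Lewis--Sendov calculus for spectral functions: in the SVD basis $\bY = \bU \bSigma \bV^\top$, writing $\bA = \bU^\top \bH \bV$, the Hessian takes the form $\sum_{i,j} w_{ij}\, |A_{ij}|^2$ where the weights are divided differences of $(\phi_\beta^*)' = \beta \sinh$. Convexity of $\cosh$ gives the trapezoidal bound $\tfrac{\sinh a - \sinh b}{a - b} = \tfrac{1}{b-a}\int_a^b \cosh(t)\,dt \leq \tfrac{\cosh a + \cosh b}{2}$, so each $w_{ij}$ is dominated by $\tfrac{\beta}{2}(\cosh(\sigma_i(\bY)) + \cosh(\sigma_j(\bY)))$. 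Splitting the resulting sum row-by-row and column-by-column and using $\|\bA_{i,\cdot}\|_2^2, \|\bA_{\cdot,j}\|_2^2 \leq \|\bA\|_\infty^2 = \|\bH\|_\infty^2$ collapses the expression to a constant multiple of $\beta \cdot \|\bH\|_\infty^2 \sum_i \cosh(\sigma_i(\bY))$, which by the previous paragraph is at most $2(\tau + \beta \min\{m,n\})\|\bH\|_\infty^2$.

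The main obstacle is managing the rectangular Hessian cleanly: the off-diagonal couplings between the left- and right-singular spaces, together with the extra rows or columns when $m \neq n$, are where the factor of $2$ in the final constant naturally enters. One option is to invoke the direct Lewis--Sendov formula for singular-value functions; an alternative is to work through the symmetric embedding $\tilde{\bY} = \begin{pmatrix} \bzero & \bY \\ \bY^\top & \bzero \end{pmatrix}$, for which $\|\tilde{\bY}\|_1 = 2\|\bY\|_1$, $\|\tilde{\bH}\|_\infty = \|\bH\|_\infty$, and $\tilde{\Phi}_\beta(\tilde{\bY}) = 2 \Phi_\beta(\bY) + \mathrm{const}$, then carefully track the resulting factors of $2$. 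With the smoothness bound in hand, Lemma~\ref{local_dual_smooth:lemma} immediately yields the claimed $(2(\tau + \beta \min\{m,n\}))^{-1}$-strong convexity of $\Phi_\beta$ on $\btr{\tau}$.
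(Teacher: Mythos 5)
Your proposal is correct and follows essentially the same route as the paper: compute $\phi_\beta^* = \beta\cosh$, bound the divided differences of $\beta\sinh$ by the average of $\beta\cosh$ at the endpoints, use the trace/rank bound $\beta\sum_i \cosh(\sigma_i(\bY)) \leq \tau + \beta\min\{m,n\}$, and pass back through the local smoothness--strong convexity duality of Lemma~\ref{local_dual_smooth:lemma}. The symmetric embedding $\bY \mapsto \bigl(\begin{smallmatrix} \bzero & \bY \\ \bY^\top & \bzero \end{smallmatrix}\bigr)$ that you offer as one of your two options for handling the rectangular Hessian is exactly the paper's argument (it proves a symmetric lemma over a rank-$r$ subspace and then tracks the factors of $2$ from $\|S(\bX)\|_1 = 2\|\bX\|_1$ and $\Phi_\beta(S(\bX)) = 2\Phi_\beta(\bX)$), so the remaining bookkeeping you defer is carried out there.
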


We denote the $d$-dimensional symmetric matrices of trace-ball with maximal
radius $\tau$ by $$\btrs{\tau} = \{\bX \in \SS^d: \|\bX\|_1 \leq \tau\} ~.$$ We
prove the above theorem by first proving the lemma below for matrices in
$\btrs{\tau}$. We then extend it to arbitrary matrices using a symmetrization
argument, using a technique similar to \cite{juditsky2008large,
warmuth2007winnowing, hazan2012near}.

\begin{definition}
    \label{sym_rank:def}
    Let $\calK \subseteq \btrs{\tau}$ be a subset of matrices and $\calX \subseteq \SS^d$
    be a vector space containing $\calK$ such that
    $\forall \bX \in \calX, \rank(\bX) \leq r$ and $\nabla\Phi_{\beta}(\bX) \in \calX$.

\end{definition}

This abstraction will be useful in translating strong convexity of arbitrary matrices
to the symmetric case. The bound on the rank is essential to give a modulus of
strong convexity result that depends only on $\min\{m, n\}$ rather than $m+n$.
The final property is necessary for the low rank structure to be preserved after
a primal-dual mapping.

\begin{lemma}
    \label{symmetric_matrix_strcvx:theorem}
    The trace function $\Phi_{\beta}$ is $(2(\tau + \beta r))^{-1}$-strongly
    convex w.r.t the trace norm over $\calK$.
\end{lemma}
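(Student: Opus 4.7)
The plan is to invoke Lemma~\ref{local_dual_smooth:lemma} and reduce the claim to showing that the conjugate $\Phi_\beta^*$ is locally $2(\tau+\beta r)$-smooth with respect to the spectral norm at every point $\bY = \nabla \Phi_\beta(\bX)$ for $\bX \in \calK$. By Theorem~\ref{matrix_conj:theorem}, $\Phi_\beta^* = \phi_\beta^* \circ \lambda$, and a direct computation shows that the scalar conjugate is $\phi_\beta^*(y) = \beta\cosh(y)$ with $(\phi_\beta^*)'(y) = \beta\sinh(y)$ and $(\phi_\beta^*)''(y) = \beta\cosh(y)$. Writing $\bY = \bU\diag(\mu)\bU^\top$, one has $\mu_i = \arcsinh(x_i/\beta)$ where the $x_i$ are the eigenvalues of $\bX$; in particular $\bY$ has the same rank as $\bX$, at most $r$.

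The second step uses the Daleckii--Krein formula to express the Hessian action on a symmetric direction $\bZ$ as
\[
\bZ^\top \nabla^2 \Phi_\beta^*(\bY)\,\bZ \;=\; \sum_{i,j} M_{ij}\, \tilde Z_{ij}^{\,2}, \qquad \tilde \bZ \eqdef \bU^\top \bZ \bU,
\]
where $M_{ii} = (\phi_\beta^*)''(\mu_i) = \sqrt{x_i^2+\beta^2}$ and, for $i\neq j$, $M_{ij}$ is the divided difference of $\beta\sinh$ between $\mu_i$ and $\mu_j$. By the mean value theorem together with the convexity and evenness of $\cosh$, I would bound $M_{ij}\le \max\{s_i,s_j\}\le s_i+s_j$, where $s_k \eqdef \sqrt{x_k^2+\beta^2}$. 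Letting $\bS = \diag(s_1,\dots,s_d)$, this collapses to
\[
\bZ^\top \nabla^2 \Phi_\beta^*(\bY)\,\bZ \;\le\; 2\,\Tr\!\big(\bS\,\tilde\bZ^{\,2}\big).
\]

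The crux is converting the right-hand side into the form $L \|\bZ\|_\infty^2$ with $L$ scaling like $\tau + \beta r$ instead of the trivial $\tau + \beta d$. Here I exploit the rank-$r$ structure of $\calX$: since $\bZ\in \calX$, both $\tilde \bZ$ and $\tilde\bZ^{\,2}$ have rank at most $r$. Split $\bS = \bS_0 + \beta \bI$ with $\bS_0 = \diag(s_i-\beta)$; the inequality $\sqrt{x^2+\beta^2} - \beta \le |x|$ gives $\|\bS_0\|_1 \le \|\bX\|_1 \le \tau$, so matrix H\"older yields $\Tr(\bS_0 \tilde\bZ^{\,2}) \le \tau\|\tilde\bZ^{\,2}\|_\infty = \tau \|\bZ\|_\infty^2$. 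For the remaining piece, $\beta\Tr(\tilde\bZ^{\,2}) = \beta \|\tilde\bZ\|_F^2 \le \beta r \|\tilde\bZ\|_\infty^2 = \beta r \|\bZ\|_\infty^2$, using the rank-$r$ bound on $\tilde\bZ$. Combining gives the desired smoothness constant $L = 2(\tau + \beta r)$, and Lemma~\ref{local_dual_smooth:lemma} converts this into the claimed strong convexity modulus.

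The main obstacle is applying the local duality lemma carefully in this subspace-restricted matrix setting: strong convexity over $\calK$ constrains only tangent directions, which all lie in $\calX$, and the dual smoothness argument must be restricted accordingly. The closure property $\nabla \Phi_\beta(\calX) \subseteq \calX$ built into Definition~\ref{sym_rank:def} is precisely what guarantees that the primal-dual mapping preserves the rank-$r$ structure, allowing the restricted duality argument to go through and producing the desired $\beta r$ dependence rather than $\beta d$.
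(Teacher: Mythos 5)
Your proposal is correct and takes essentially the same route as the paper: local duality of smoothness and strong convexity, the conjugate $\beta\cosh$, a divided-difference bound on the Hessian quadratic form with constant $2$ coming from the convexity of $\cosh$, the rank-$r$ restriction to obtain $\beta r$ in place of $\beta d$, and the bound $\sqrt{x^2+\beta^2}\le |x|+\beta$. The only differences are cosmetic: you re-derive the Juditsky--Nemirovski inequality via the Daleckii--Krein formula and split the diagonal factor into $\diag(s_1-\beta,\dots,s_d-\beta)$ plus $\beta$ times the identity (handled by H\"older and a Frobenius-norm rank bound), whereas the paper cites that lemma directly and applies von Neumann's trace inequality to the top $r$ singular values of $(\phi^{*}_{\beta})''(\bX)$; both yield the same constant $2(\tau+\beta r)$.
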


To prove the symmetric variant, we show that $\Phi^{*}_{\beta}$ is smooth
with respect to the spectral norm, which is the dual norm of the trace norm.
The result then follows directly from Lemma~\ref{local_dual_smooth:lemma}.

From Theorem~\ref{matrix_conj:theorem}, $\Phi_{\beta}$ has Fenchel conjugate
\begin{align*}
    \Phi^{*}_{\beta}(\bX) = \Tr(\phi^{*}_{\beta}(\bX))~.
\end{align*}
Since the derivative of the conjugate of a function is the inverse of the derivative
of the function, we have
$$
\frac{d\phi^{*}_{\beta}}{dx} =
\left(\frac{d\phi_{\beta}}{dx}\right)^{-1}= \beta\sinh(x) ~.
$$
The indefinite integral of the above yields that up to a constant,
$\phi^{*}_{\beta}(x) = \beta \cosh(x)$. Clearly, $\Phi_{\beta}$ is not smooth
everywhere. Nonetheless, we do have smoothness over $\nabla\Phi_{\beta}(\btrs{\tau})$.
Before proving this property, we introduce a clever technical lemma
of~\citet{juditsky2008large} that allows us to reduce the spectral smoothness
for matrices to smoothness of functions in the vector-case.
\begin{lemma}% \cite{juditsky2008large}
    \label{seconddirderiv:lemma}
    Let $f:\RR \rightarrow \RR$ be a function and $c\in\RR_+$ such that
    that for $a \geq b$,
    \begin{align}
        \frac{f'(a) -f'(b)}{a-b} \leq \frac{c(f''(a) + f''(b))}{2}~.
    \end{align}
    Let $F: \SS^d \rightarrow \RR$ be a function defined by $F(\bX)= \Tr(f(\bX))$.
    Then, the second directional derivative of $F$ is bounded for any $\bH\in\SS^d$ as
    follows,
    \begin{align*}
        \dd{\bH}{F(\bX)} \leq c\Tr(\bH f''(\bX)\bH)~.
        % \nabla^2_\bH\left[F(\bX)\right] \leq c\Tr(\bH f''(\bX)\bH)~.
    \end{align*}
\end{lemma}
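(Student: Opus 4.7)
The plan is to reduce the second directional derivative on the matrix side to a divided-difference sum over eigenvalue pairs of $\bX$, and then apply the scalar hypothesis termwise. First I would diagonalize $\bX = \bU\,\diag(\lambda_1,\dots,\lambda_d)\,\bU^\top$ and replace $\bH$ by $\tilde{\bH} \eqdef \bU^\top \bH \bU$. Both $\dd{\bH}{F(\bX)}$ and $\Tr(\bH f''(\bX)\bH)$ are unitarily invariant under the simultaneous change $(\bX,\bH)\mapsto(\bU^\top\bX\bU,\bU^\top\bH\bU)$, so without loss of generality we may assume $\bX$ is diagonal and that $\bH$ is its coordinate representation in the eigenbasis.

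The next step is to invoke the Daleckii--Krein representation of the second derivative of a spectral trace function: for $g(t)=\Tr(f(\bX+t\bH))$ with $\bX$ diagonal, one has
\begin{align*}
\dd{\bH}{F(\bX)} = g''(0) = \sum_{i,j}\Delta_f(\lambda_i,\lambda_j)\,H_{ij}^2,
\end{align*}
where the divided difference is $\Delta_f(a,b)=(f'(a)-f'(b))/(a-b)$ for $a\neq b$ and $\Delta_f(a,a)=f''(a)$. This identity is the only nontrivial analytic input and is the main obstacle; a clean way to derive it is to first verify it for $f(x)=x^n$ by directly expanding $\Tr((\bX+t\bH)^n)$ via cyclicity of the trace and reading off the $t^2$ coefficient, and then pass to a general twice-differentiable $f$ by polynomial approximation on a compact interval containing the spectrum of $\bX + t\bH$ for small $t$.

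With this representation in hand, the hypothesis of the lemma says exactly that $\Delta_f(a,b) \leq c(f''(a)+f''(b))/2$, which by symmetry of the divided difference applies for all (ordered or unordered) pairs. Substituting and using $H_{ij}=H_{ji}$ to symmetrize the indices yields
\begin{align*}
\dd{\bH}{F(\bX)} \leq \frac{c}{2} \sum_{i,j} \bigl(f''(\lambda_i) + f''(\lambda_j)\bigr)\, H_{ij}^2 = c \sum_{i,j} f''(\lambda_j)\, H_{ij}^2.
\end{align*}
In the diagonalizing basis $f''(\bX)=\diag(f''(\lambda_1),\dots,f''(\lambda_d))$, so a direct computation gives $\Tr(\bH f''(\bX)\bH) = \sum_{i,j} f''(\lambda_j)\, H_{ij}^2$, which matches the preceding right-hand side and establishes the claim. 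The remaining work is purely bookkeeping: nothing beyond the Daleckii--Krein step requires more than the scalar hypothesis and the symmetry of $\bH$.
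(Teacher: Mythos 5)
The paper never proves this lemma: it is imported as a ``clever technical lemma'' of \citet{juditsky2008large} and used as a black box, so there is no in-paper argument to compare yours against. Judged on its own, your proof is correct and is essentially the standard (and, in the cited source, the intended) route: reduce to diagonal $\bX$ by simultaneous unitary conjugation, write the second directional derivative of $\Tr(f(\bX))$ as $\sum_{i,j}\Delta_f(\lambda_i,\lambda_j)H_{ij}^2$ with $\Delta_f(a,b)=(f'(a)-f'(b))/(a-b)$, apply the scalar hypothesis pairwise using the symmetry $H_{ij}=H_{ji}$, and identify $\sum_{i,j}f''(\lambda_j)H_{ij}^2$ with $\Tr(\bH f''(\bX)\bH)$. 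The verification of the divided-difference (Daleckii--Krein) identity for $f(x)=x^n$ via cyclicity of the trace, followed by polynomial approximation, is the right way to make that one nontrivial step self-contained.

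Two small points are worth tightening. First, the hypothesis as literally stated is vacuous on the diagonal pairs $\lambda_i=\lambda_j$ (it divides by $a-b$), yet the representation does contain the terms $f''(\lambda_i)H_{ii}^2$; you need either to pass to the limit $a\to b$ (requiring continuity of $f''$, which then forces $c\ge 1$ wherever $f''>0$) or to add the diagonal case as a separate, trivial observation. In the paper's application $f''(x)=\beta\cosh(x)>0$ and $c=2$, so this is harmless but should be stated. Second, the polynomial-approximation step silently needs enough regularity of $f$ (say $f\in C^2$ on a neighborhood of the spectrum) for the divided differences of the approximants to converge to those of $f'$; again satisfied by $\beta\cosh$, but the lemma as written assumes only ``a function.'' Neither issue affects the use of the lemma in the paper.
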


We are now prepared to analyze the smoothness of $\Phi^{*}_{\beta}$.
\begin{lemma}[Local Smoothness] % of $\Phi^{*}_{\beta}$]
    The trace function
    $\Phi^{*}_{\beta}$ is locally $2(\tau+ \beta r)$-smooth with respect to the spectral
    norm for all matrices in $\nabla\Phi_{\beta}(\calK)$.
\end{lemma}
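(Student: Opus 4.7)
My plan is to invoke the second-order characterization of smoothness from Lemma~\ref{second_strsmooth:lemma} and reduce the matrix calculation to a scalar one via the Juditsky-style comparison in Lemma~\ref{seconddirderiv:lemma}, applied to $f(x) = \phi^{*}_{\beta}(x) = \beta\cosh(x)$. The scalar hypothesis of Lemma~\ref{seconddirderiv:lemma} amounts to $\frac{\sinh(a)-\sinh(b)}{a-b} \le \frac{c}{2}(\cosh(a)+\cosh(b))$, and I expect this to hold with $c=1$ by combining the identity $\sinh(a)-\sinh(b) = 2\cosh\!\big(\tfrac{a+b}{2}\big)\sinh\!\big(\tfrac{a-b}{2}\big)$, the elementary bound $\sinh(u)\le u\cosh(u)$ for $u\ge 0$, and the product-to-sum identity $\cosh(x)\cosh(y)=\tfrac12(\cosh(x+y)+\cosh(x-y))$. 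Lemma~\ref{seconddirderiv:lemma} then delivers
\[
\dd{\bH}{\Phi^{*}_{\beta}(\bY)} \;\le\; \beta\,\Tr\!\big(\bH\cosh(\bY)\bH\big)
\]
for any $\bY = \nabla\Phi_{\beta}(\bX)$ with $\bX\in\calK$ and any admissible direction $\bH$.

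I would then translate the right-hand side back into a statement about the primal point $\bX$. Since $\bY = \arcsinh(\bX/\beta)$ shares eigenvectors with $\bX$ and $\cosh(\arcsinh(t)) = \sqrt{1+t^{2}}$, diagonalizing $\bX = \bU\diag[\lambda_1,\dots,\lambda_d]\bU^{\top}$ and setting $a_i = (\bU^{\top}\bH^{2}\bU)_{ii} = \|\bH\bu_i\|_2^{2}$ yields $\beta\,\Tr(\bH^{2}\cosh(\bY)) = \sum_i \sqrt{\beta^{2}+\lambda_i^{2}}\,a_i$. Using the elementary bound $\sqrt{\beta^{2}+\lambda_i^{2}}\le \beta+|\lambda_i|$ together with $a_i\le \|\bH\|_{\infty}^{2}$ splits the sum into $\beta\,\Tr(\bH^{2})$ plus $\sum_i|\lambda_i|a_i \le \|\bH\|_{\infty}^{2}\|\bX\|_1 \le \tau\|\bH\|_{\infty}^{2}$.

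The crucial step is bounding $\beta\,\Tr(\bH^{2})$ by $\beta r\|\bH\|_{\infty}^{2}$, and this is precisely where Definition~\ref{sym_rank:def} enters: since $\bH$ must be tested within $\calX$ and every matrix in $\calX$ has rank at most $r$, we have $\|\bH\|_2^{2}\le r\|\bH\|_{\infty}^{2}$. The closure property $\nabla\Phi_{\beta}(\calX)\subseteq\calX$ ensures that $\bY$ indeed lies in $\calX$ so the eigenbasis manipulation is internally consistent. Combining both pieces,
\[
\dd{\bH}{\Phi^{*}_{\beta}(\bY)} \;\le\; (\tau+\beta r)\|\bH\|_{\infty}^{2} \;\le\; 2(\tau+\beta r)\|\bH\|_{\infty}^{2},
\]
which by Lemma~\ref{second_strsmooth:lemma} gives local smoothness with the stated constant.

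The main obstacle I anticipate is the rank bookkeeping. A naive application of the same Juditsky-lemma bound to all of $\SS^{d}$ would replace $r$ by $d$ in $\Tr(\bH^{2})\le r\|\bH\|_{\infty}^{2}$, yielding a modulus that scales with $d$ rather than $\min\{m,n\}$. Disciplining the argument so that both the mirror image $\bY$ and the test direction $\bH$ are kept inside the rank-$r$ subspace $\calX$ is exactly what the abstraction in Definition~\ref{sym_rank:def} was introduced to enable, and it is what allows the subsequent symmetrization in Theorem~\ref{asymmetric_matrix_strcvx:theorem} to produce the $\min\{m,n\}$ dependence rather than $m+n$.
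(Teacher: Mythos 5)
Your proof is correct and follows essentially the same route as the paper: the second-order smoothness criterion of Lemma~\ref{second_strsmooth:lemma}, the scalar-to-matrix reduction of Lemma~\ref{seconddirderiv:lemma}, the identity $(\phi^{*}_{\beta})''(\phi_{\beta}'(x)) = \sqrt{\beta^2+x^2}\le\beta+|x|$, and the rank/trace-norm bookkeeping enabled by Definition~\ref{sym_rank:def}. Your verification of the scalar hypothesis with $c=1$ (via exact hyperbolic identities, where the paper's mean-value-theorem argument only gives $c=2$) is a genuine, if minor, sharpening: it actually yields $(\tau+\beta r)$-smoothness, a factor of two better than the stated constant.
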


\begin{proof}
    To prove local smoothness, we use the second order conditions from Lemma
    \ref{second_strsmooth:lemma}. This requires us to upper bound the second
    directional derivatives for all directions corresponding to matrices of unit
    spectral norm. We consider the matrix $$\bX = \nabla\Phi(\bY) =
    \phi_{\beta}'(\bY) = \arcsinh\left(\frac{\bY}{\beta}\right) ~.$$ Note that
    $(\phi^{*}_{\beta})''(x) = \beta\cosh(x)$ is positive and convex.
    Therefore, by the mean value theorem, there exists $c \in [a,b]$ for which,
    \begin{align*}
        \frac{(\phi^{*}_{\beta})'(b) - (\phi^{*}_{\beta})'(a)}{b-a} =
        (\phi^{*}_{\beta})''(c) \leq
        \max\left\{\phi^{*}_{\beta})''(a) \,,\, (\phi^{*}_{\beta})''(b)\right\} \leq
        (\phi^{*}_{\beta})''(a) + (\phi^{*}_{\beta})''(b) ~.
    \end{align*}

    We note that by Definition~\ref{sym_rank:def}, $\nabla\Phi_{\beta}(\calX)
    \subseteq \calX$, so we can restrict ourselves to the vector space $\calX$.
    Therefore, applying Lemma \ref{seconddirderiv:lemma}, we have
    \begin{align*}
        \sup_{\bH\in \calX: \|\bH\|_{\infty} \leq 1}\dd{\bH}{\Phi_{\beta}^{*}(\bX)}
        &\leq \sup_{\bH\in \calX: \|\bH\|_{\infty}\leq 1} 2 \Tr\left(\bH (\phi^{*}_{\beta})''(\bX)\,\bH\right)\\
        &= \sup_{\bH\in \calX: \|\bH\|_{\infty}\leq 1} 2\Tr\left(\bH^2 (\phi^{*}_{\beta})''(\bX)\right)
        & [\mbox{Commutativity of trace}]\\
        &\leq \sup_{\bH\in \calX: \|\bH\|_{\infty}\leq 1} 2\ip{\sigma^2(\bH)}{\sigma((\phi^{*}_{\beta})''(\bX))}
        & [\mbox{von Neumann's trace inequality}]
    \end{align*}
    where von Neumann's trace inequality stands for,
    $\Tr(\bA^{\top}\bB) \leq \ip{\sigma(\bA)}{\sigma(\bB)}$.
    Now, since $\bH\in \calX$, we know $\rank(\bH)\leq r$, and so $\bH$ can have
    at most $r$ nonzero singular values, yielding
    \begin{align*}
        \sup_{\bH\in \calX: \|\bH\|_{\infty} \leq 1}\dd{\bH}{\Phi_{\beta}^{*}(\bX)}
        &\leq \sup_{\bH\in \calX:\|\bH\|_{\infty} \leq 1}
        2\|\bH^2\|_{\infty}\sum_{i=1}^r\sigma_i((\phi^{*}_{\beta})''(\bX)) \\
        &= 2\sum_{i=1}^r(\phi^{*}_{\beta})''(\phi_{\beta}'(\sigma_i(\bY))) ~.
    \end{align*}
    % 			& [\mbox{H\"older's inequality}] \\ &=
    % 			2\Tr\left((\phi_{\beta}^{*})''(\phi_{\beta}'(\bY))\right) ~,
    Now, note that $$(\phi_{\beta}^{*})''(\phi_{\beta}'(x)) =
    \beta\cosh\left(\arcsinh\left({x}/{\beta}\right)\right) = \sqrt{\beta^2 +
    x^2} \leq \beta + |x| ~ .$$ It then follows that \begin{align*}
                                                         \sum_{i=1}^r(\phi^{*}_{\beta})''(\phi_{\beta}'(\sigma_i(\bY))) \leq \beta
                                                         r + \|\bY\|_1 \leq \tau+\beta r~.
    \end{align*}
    % \begin{align*} \Tr\left((\phi_{\beta}^{*})''(\phi_{\beta}'(\bY))\right)
    % \leq \beta d + \|\bY\|_1 \leq 1+\beta d~.  \end{align*}
    Therefore, the second directional derivative in bounded by $2(\tau+\beta r)$
    as desired.
\end{proof}

\iffalse
\ys{Not clear what you mean by the following.\ldots "H\"older's inequality for
matrices can be used to combine this step and the subsequent step, we split
this for later use."} \ug{I previously used a label in the von Neumann's trace
inequality step, as in the asymmetric case, we have at most $\min\{m, n\}$
nonzero singular values. If we didn't need this refinement, using the Von
Neumman trace inequality would be unnecessary, as we could replace this
inequality and H\"older's inequality with the generalization of H\"older's
inequality for matrices in a single step. In the current state there is a
dangling reference in the proof of
Theorem~\ref{asymmetric_matrix_strcvx:theorem}. I wasn't super happy with
this anyway, as this theorem isn't truly being used as a lemma. In the general
theorem, I subtly refined the analysis. I have changed the the lemma to apply
over a subset of the symmetric trace-ball $\calK \subseteq \btrs{1}$ and
provided a bound that depends on a rank bound for $\calK$. I can easily
recover the previous version using git, so I'm avoiding the iffalses for this
change.}
\fi

\begin{proof}[Theorem~\ref{asymmetric_matrix_strcvx:theorem}]
    We introduce the symmetrization operator $S:\RR^{m\times n} \rightarrow \SS^{m+n}$
    which is a linear function that takes a matrix to a symmetric matrix,
    \begin{align*}
        S(\bX) = \begin{bmatrix}
                     0 & \bX \\
                     \bX^{\top}& 0
        \end{bmatrix}~.
    \end{align*}
    The eigenvalues of $S(\bX)$ are exactly one copy of singular values
    and one copy of negative singular values of $\bX$. Therefore, we have
    $$\Phi_{\beta}(\bX) =
    \sum_{i=1}^{\min\{m,n\}}\hspace{-6pt}\phi_{\beta}(\sigma_i(\bX)) =
    \frac12{\Phi_{\beta}(S(\bX))} ~. $$
    Technically, for the above to hold true, we should shift $\phi_{\beta}$ such
    that its $0$ is at $0$. Since a constant shift does not affect
    diameter or convexity properties so this is not an issue.

    \smallskip

    Let $\mu$ be the modulus of strong convexity in the symmetrized space. We
    bound $\Phi_{\beta}(\bX)$ from below as follows,
    \begin{align*}
        2\Phi_{\beta}(\bX) &= \Phi_{\beta}(S(\bX)) \\
        &\geq \Phi_{\beta}(S(\bY)) + \ip{\nabla \Phi_{\beta}(S(\bY))}{S(\bX) - S(\bY)} + \frac{\mu}{2} \|S(\bX) -S(\bY)\|^2_1\\
        &= \Phi_{\beta}(S(\bY)) + \ip{\nabla \Phi_{\beta}(S(\bY))}{S(\bX -\bY)} + \frac{\mu}{2} \|S(\bX -\bY)\|^2_1\\
        &= 2\Phi_{\beta}(\bY) + 2\ip{\nabla \Phi_{\beta}(\bY)}{\bX -\bY} + 2\mu \|\bX -\bY\|^2_1 ~ .
    \end{align*}
    Therefore, the modulus of strong convexity over asymmetric matrices is $2\mu$.

    Note that $\calK = \{S(\bX) : \bX \in  \btr{2\tau}\}$ satisfies the properties listed
    in Definition~\ref{sym_rank:def}, with $r= 2\min\{m,n\}$. In particular,
    we have a vector space $\calX = \{S(\bX): \bX \in \RR^{m \times n}\}$ containing
    symmetric matrices of rank at most $2\min\{m,n\}$. Furthermore, for any
    $\bX \in \calX$, $\bX = S(\bY)$ for some $\bY\in \RR^{m\times n}$
    and thus,
    \begin{align*}
        \nabla\Phi_{\beta}(\bX) = \nabla\Phi_{\beta}(S(\bY))=
        S(\nabla\Phi_{\beta}(\bY)) \in \calX ~.
    \end{align*}
    It follows from Theorem~\ref{symmetric_matrix_strcvx:theorem} that
    $\mu \geq (2(2\tau+ 2\beta \min \{m,n\}))^{-1}$,
    and so the strong convexity is at most $2\mu \geq (2(\tau+ \beta \min \{m,n\}))^{-1}$
    as desired.
\end{proof}\vspace{-16pt}

\subsection{SHU algorithm}
We next describe, the Spectral Hypentropy Update ($\SHU$),
an OCO algorithm over a convex domain of matrices $\calK \subseteq\RR^{m \times n}$.

%%%%%%%%%%%%%%%%%%%%%%%%%%%%%%%%%%%%%%%%%%%%%%%%%%%%%%%%%%%%%%%%%%%%%%%%%%%%%%%
\begin{algorithm2e}[ht]
    \label{shu:algorithm}
    \SetAlgoLined
    \SetKw{KwBy}{by}
    \KwIn{$\eta >0, \beta > 0$, convex domain of matrices $\calK \subseteq \RR^{m \times n}$}
    Initialize weight matrix $\bW^{1} = \bzero$\;
    \For{$i=1$ \KwTo $T$}{
    (a) Predict $\bW^{t}$ ~ ~
    (b) Incur loss $\ell_t(\bW_t)$ ~ ~
    (c) Calculate $\bG^t =  \nabla \ell_t(\bW^t)$ \;
    \smallskip
    Update:
    % $ \begin{align}\label{SHU_update:eq}
    $  \bW^{t+\frac{1}{2}} =
    \beta \sinh{\Big(\arcsinh{\Big(\frac{\bW^{t}}{\beta}\Big)} - \eta
    \bG^{t}}\Big) $\;
    % \end{align}
    \smallskip
    Project onto $\calK$:
    % \begin{align*}
    $\bW^{t+1} = \displaystyle
    \argmin_{\bV \in \calK}
    D^{\beta}_{\Phi}\infdivx{\bV}{\bW^{t+\frac{1}{2}}}$
    % \end{align*}
    }
    \caption{Spectral \Hypent Update (SHU)}
\end{algorithm2e}
%%%%%%%%%%%%%%%%%%%%%%%%%%%%%%%%%%%%%%%%%%%%%%%%%%%%%%%%%%%%%%%%%%%%%%%%%%%%%%%

The pseudocode of $\SHU$ is provided in
Algorithm~\ref{shu:algorithm}.  The update
step of SHU requires a spectral decomposition. We define
$f(\bA) = \bU f(\diag\left[\sigma(\bA)\right])\bV^{\top}$ where
$\bA = \bU\diag(\sigma(\bA))\bV^{\top}$ is the singular value decomposition of
$\bA$. We use this definition twice, once with $f=\arcsinh$, and after
subtracting the gradient with $f=\sinh$. We prove the following regret
bound for $\SHU$.
\begin{theorem}
    \label{trace_normregret:theorem}
    Let $\bW \in \btr{\tau} \subseteq \RR^{m\times n}$ and let
    $\|\bG^t\|_{\infty} \leq G_{\infty}$ be a spectral norm bound on the gradients.
    For $\gamma = \frac{\beta}{\tau} \leq 1$, setting,
    $$\eta =
    \frac{1}{2G_{\infty}}
    \sqrt{\frac{ \log{\big(\tfrac{3}{\gamma}\big)}}{T(1+\gamma\min\{m,n\})}}
    ~,~ \mbox{ yields } ~ ~
    \regret_T(\SHU) \leq
    4\tau G_{\infty} \sqrt{T (1+\gamma\min\{m,n\})\log{(\tfrac{3}{\gamma})}}~.
    $$
\end{theorem}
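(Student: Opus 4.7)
The plan is to mirror the vector-case argument (Theorem~\ref{l1regret:theorem}) but lifted to singular values, feeding the ingredients into the OMD regret bound of Theorem~\ref{omd:theorem}. Three pieces are needed: (i) strong convexity of $\Phi_\beta$ w.r.t.\ the trace norm over $\btr{\tau}$, (ii) a bound on $\diam_{\Phi_\beta}(\btr{\tau})$, and (iii) a dual-norm bound on the gradients. Piece (i) is already established: Theorem~\ref{asymmetric_matrix_strcvx:theorem} gives modulus $\mu=\bigl(2(\tau+\beta\min\{m,n\})\bigr)^{-1}=\bigl(2\tau(1+\gamma\min\{m,n\})\bigr)^{-1}$. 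For (iii), the dual of the trace norm is the spectral norm, and the hypothesis gives $\|\bG^t\|_\infty\le G_\infty$.

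The real work is (ii), the diameter bound. Because $\Phi_\beta$ is a spectral function with $\phi_\beta(0)=-\beta$ and $\nabla\Phi_\beta(\mathbf{0})=\mathbf{0}$, the Bregman divergence from $\mathbf{0}$ collapses to
\begin{align*}
D^\beta_\Phi\infdivx{\bX}{\bzero}
= \Phi_\beta(\bX)-\Phi_\beta(\bzero)
= \sum_{i=1}^{\min\{m,n\}}\!\!\bigl(\phi_\beta(\sigma_i(\bX))-\phi_\beta(0)\bigr),
\end{align*}
so I can reuse the scalar estimates used for \eqref{l1_diameter:eq} applied to the non-negative vector $\sigma(\bX)$, which lies in the $\ell_1$-ball of radius $\tau$. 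The general pointwise bound derived just before \eqref{l2_diameter:eq} gives $\phi_\beta(\sigma_i)-\phi_\beta(0)\le \sigma_i\log\bigl(\tfrac{1}{\beta}(\sqrt{\sigma_i^2+\beta^2}+\sigma_i)\bigr)$. Since each $\sigma_i\le\tau$ and $\beta=\gamma\tau\le\tau$, I get $\sqrt{\sigma_i^2+\beta^2}+\sigma_i\le(\sqrt{2}+1)\tau$, hence
\begin{align*}
\diam_{\Phi_\beta}(\btr{\tau})
\;\le\;\sum_i\sigma_i\log\Bigl(\tfrac{(\sqrt{2}+1)\tau}{\beta}\Bigr)
\;=\;\|\bX\|_1\log\Bigl(\tfrac{\sqrt{2}+1}{\gamma}\Bigr)
\;\le\;\tau\log\Bigl(\tfrac{3}{\gamma}\Bigr).
\end{align*}

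Plugging $\mu$, $D=\tau\log(3/\gamma)$, and $G=G_\infty$ into Theorem~\ref{omd:theorem} yields
\begin{align*}
\regret_T(\SHU)\;\le\;2\sqrt{2\mu^{-1}D T G_\infty^2}
\;=\;2\sqrt{4\tau^2(1+\gamma\min\{m,n\})\,T\,G_\infty^2\log(3/\gamma)},
\end{align*}
which simplifies to the claimed $4\tau G_\infty\sqrt{T(1+\gamma\min\{m,n\})\log(3/\gamma)}$; the corresponding $\eta=\sqrt{2\mu D/(TG^2)}$ matches the stated learning rate up to the numerical constant absorbed in the final bound.

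The main obstacle is really just the diameter step: it must be executed entirely at the level of singular values, and one has to carefully use that $\gamma\le 1$ (equivalently $\beta\le\tau$) so that the scalar bound analogous to the $\beta\le 1$ regime of \eqref{l1_diameter:eq} applies. Everything else is a routine substitution into Theorem~\ref{omd:theorem} together with the already-proved Theorem~\ref{asymmetric_matrix_strcvx:theorem}.
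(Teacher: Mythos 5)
Your proposal is correct and follows essentially the same route as the paper: apply \eqref{l1_diameter:eq} at the level of singular values to get $\diam_{\Phi_\beta}(\btr{\tau})\le\tau\log(3/\gamma)$, invoke Theorem~\ref{asymmetric_matrix_strcvx:theorem} for the strong-convexity modulus, and substitute into Theorem~\ref{omd:theorem}; you simply spell out the diameter reduction in more detail than the paper does. The factor-of-2 mismatch you note between the optimal $\eta=\sqrt{2\mu D/(TG^2)}$ and the stated learning rate is a constant-level inconsistency present in the paper itself and does not affect the claimed regret bound.
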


\begin{proof}
    Like for $\HU$, we use the general OMD analysis.
    It suffices to find an upper bound on $\diam_{\Phi_{\beta}}(\btr{\tau})$ and a strong convexity bound.

    Applying~\eqref{l1_diameter:eq} on the vector singular values, we have
    $$\diam_{\Phi_{\beta}}(\btr{\tau}) \leq \tau \log\left(\frac{3\tau}{\beta}\right)~,$$
    where $\beta \leq \tau$.
    Furthermore, from Theorem~\ref{asymmetric_matrix_strcvx:theorem}, we can see that
    $\Phi_{\beta}$ is $(2(\tau+ \beta \min\{m,n\}))^{-1}$
    strongly convex with respect to the trace-norm over $\btr{\tau}$.
    Letting $\gamma = \frac{\beta}{\tau}$, the result follows from Theorem~\ref{omd:theorem}.
\end{proof}
    \section{Experimental Results} \label{experiments:sec}
Next, we experiment with HU in the context of empirical risk minimization
(ERM). In the experiments, $\bg_t$ stands for a stochastic estimate of the
gradient of the empirical loss. Thus, we can convert the regret analysis to
convergence in expectation~\cite{ONLINEtoSTOCHASTIC}.

\paragraph{Effective Learning Rate}
For small value $w$, $\sinh(w) \approx w \approx \arcsinh(w)$. As a result, near $0$
the update in $\HU$ is morally the additive update, $w_i^{t+1}
= w_i^t -\beta\eta g^{t}_i$. The product $\beta\eta$ can be viewed as the de
facto learning rate of the gradient descent portion of the interpolation. As
such, we define the \textit{effective learning rate} to be $\beta\eta$. In the
sequel, fixing the effective learning rate while changing $\beta$ is a
fruitful lens for comparing $\HU$ to with $\GD$.

\subsection{Logistic Regression}
In this experiment we use the $\HU$ algorithm to optimize a logit model.
The ambient dimension $d$ is chosen to be $500$. A weight $\bw$
is drawn uniformly at random from $[-1,1]^d$. The features are from
$\{0, 1\}^d$ and distributed according to the power law,
$\Pr[x_i =1] = {1}/{5\sqrt{i}}$. The label associated with an example
$\bx_t$ is set to $y_t = \sign(\ip{\bw}{\bx_t})$ with with probability
$0.9$ and otherwise flipped.

The algorithms are trained with log-loss using batches of size $10$.
Stochastic gradient descent and the $p$-norm algorithm~\cite{Gentile2003}
are used for comparison. As can be seen in Fig.~\ref{synth_lin:fig},
the $p$-norm algorithm performs significantly worse than HU for a large set of values of $\beta$, while
SGD performs comparably. As expected, for large value of $\beta$, SGD and
HU are indistinguishable.

\begin{figure}[h]
    \centerline{\hbox{
    \includegraphics[width=.4\linewidth]{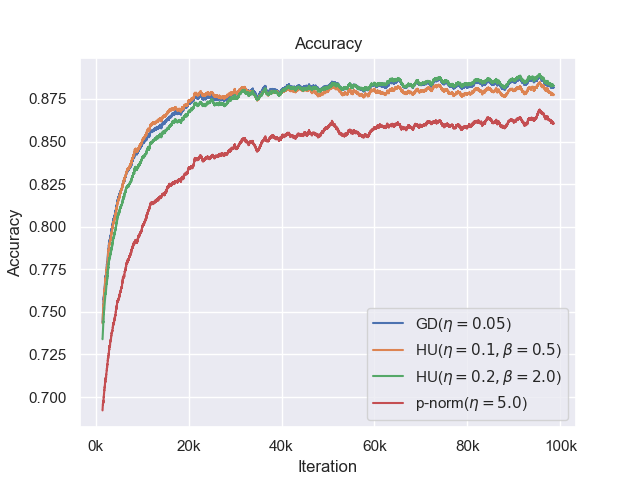}\quad
    \includegraphics[width=.4\linewidth]{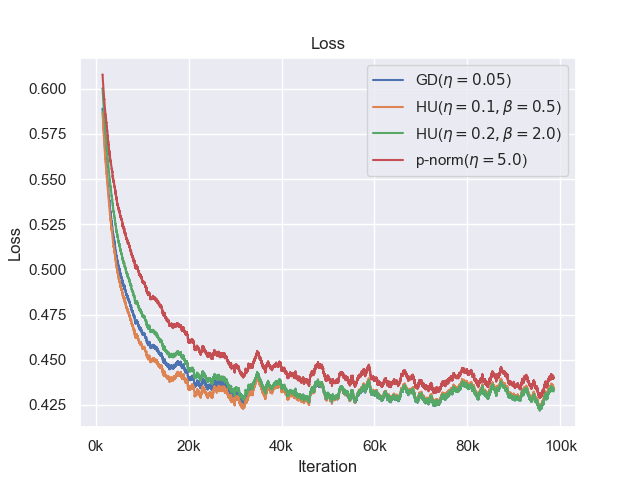}}}
    \vspace{-12pt}
    \caption{Comparison of accuracy and loss of GD, $p$-norm, and $\HU$ on binary
    logistic regression.}
    \label{synth_lin:fig}
\end{figure}
In the next experiment we use the same logit model with ambient
dimension $d$ chosen to be $10,000$. We generate weights, $\bw \in \RR^d$,
with sparsity (fraction of zero weights) $s \in\{0, 0.9\}$. The nonzero
weights are chosen uniformly at random from $[-1, 1]$. We run the algorithms
for 20,000 iterations. Rather than fixing $\eta$, we fix
$\eta' = {\eta}/{\sqrt{1 + \beta^2}}$. This way, as $\beta\to\infty$,
$\HU(\eta', \beta)$ behaves like $\GD(\eta)$ while for
small $\beta$, the update is roughly $\EGpm(\eta)$. We let $\beta_{\EG} =
{\|\bw^{\star}\|_1}/{d}$. As discussed in Appendix~\ref{egpm_appendix:sec}, this
choice of $\beta$ is similar to running $\EGpm$ with an $1$-norm bound of
$\|\bw^{\star}\|_1$. We then choose $\eta' = 0.1$ and
$\beta \in \{0.5, 1, 2, 4, 8\}\times\beta_{\EG}$.
In Fig.~\ref{synth_lin_v2:fig}, we show the
interpolation between $\GD$ and $\EGpm$. The larger $\beta$ is, the closer the
progress of HU resembles that of $\GD$. Intermediate values of $\beta$ have
progress in between the $\EGpm$ and $\GD$.
\begin{figure}[h]
    \centerline{
    \includegraphics[width=.4\linewidth]{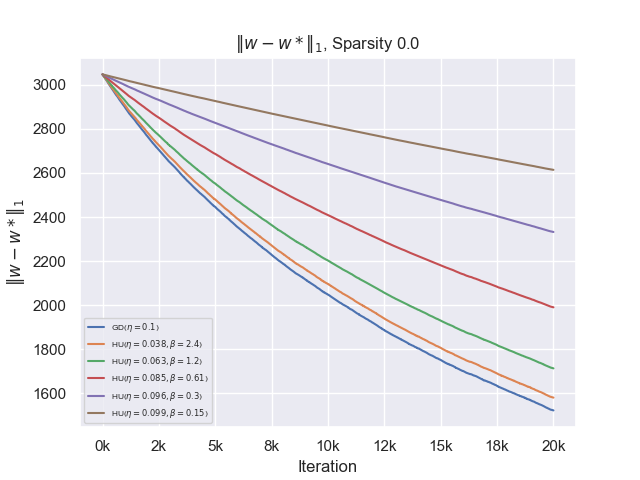}\quad
    \includegraphics[width=.4\linewidth]{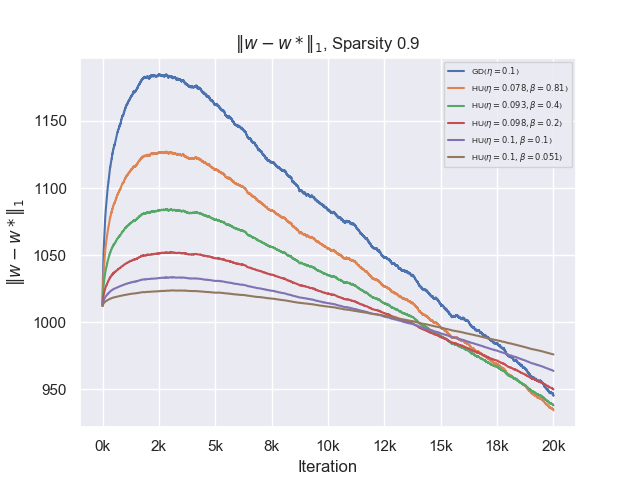}
    }
    \caption{Value of $\|\bw^t - \bw^{\star}\|_1$ in dense and sparse settings
    with $\eta' = 0.1$ and $\beta \in \{0.5, 1, 2, 4, 8\}\times\beta_{\EG}$.}
    \label{synth_lin_v2:fig}
\end{figure}

\subsection{Multiclass Logistic Regression}
In this experiment we use $\SHU$ to optimize a multiclass
logistic model. We generated 200,000 examples in $\RR^{25}$. We set the number
of classes to $k=15$. Labels are generated using a rank $5$ matrix
$\bW \in \RR^{k \times d}$. An example $\bx \in \RR^d$ was labeled according to
the prediction rule,
\begin{align*}
    y= \argmax_{i \in [k]} \{(\bW \bx)_i\} ~.
\end{align*}

With probability $5.0\%$ the label was flipped to a different one at random.
The matrix $\bW$ and each example $\bx_i$ features are determined in a joint
process to make the problem poorly conditioned for optimization. Features of
each example are first drawn from a standard normal. Weights of $\bW$ are
sampled from a standard normal distribution for the first $r$ features and
are set to $0$ for the remaining $d-r$ features. After labels are computed,
features are perturbed by Gaussian noise with standard deviation $0.05$. The
examples and weights are then scaled and rotated. Coordinate $i$ of the data
is scaled by  $s_i \propto i^{-1.1}$ where $\sum_{i=1}^d s_i = 1$. Then a
random rotation $\bR$ is applied. The inverse of these transformation is
applied to the weights. Therefore, from the original sample
$\bX_0\in\RR^{n\times d}$ and weights $\bW_0 \in \RR^{k \times d}$ the new
sample and weights are set to be $\bX = \bX_0\bR$ and $\bW= \bW_0\bR^{-1}$,
where $\bR \in \RR^{d \times d}$ is the scaling and rotation described above.

Since our ground truth weights are low rank, our goal is to find
weights of approximately low rank with low classification error. To do
this, we optimize a multiclass logistic regression loss with a
trace-norm constraint. We compare $\SHU$,
Schatten $p$-norm algorithm ($p$-norm algorithm applied to singular values) and
gradient descent in the fully stochastic (single example) case. We report results
for unconstrained optimization in Fig.~\ref{multiclass_unconstrained:fig} and
trace-norm constrained optimization in Fig.~\ref{multiclass_constrained:fig}.
In these figures only the algorithms with lowest final loss after a
grid search are depicted.

\begin{figure}[h]
    \includegraphics[width=.35\linewidth]{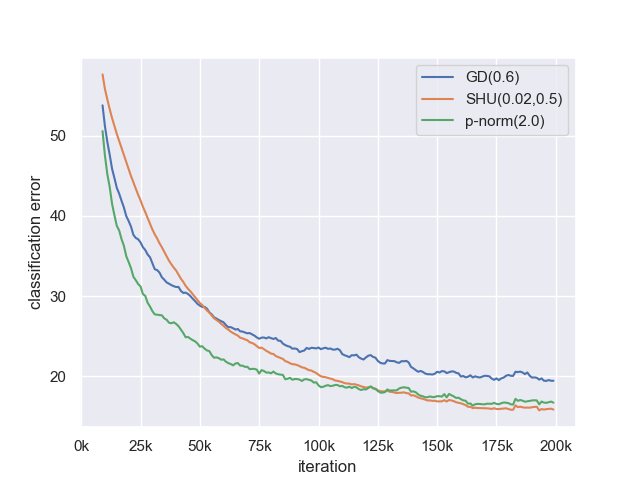}\quad
    \includegraphics[width=.35\linewidth]{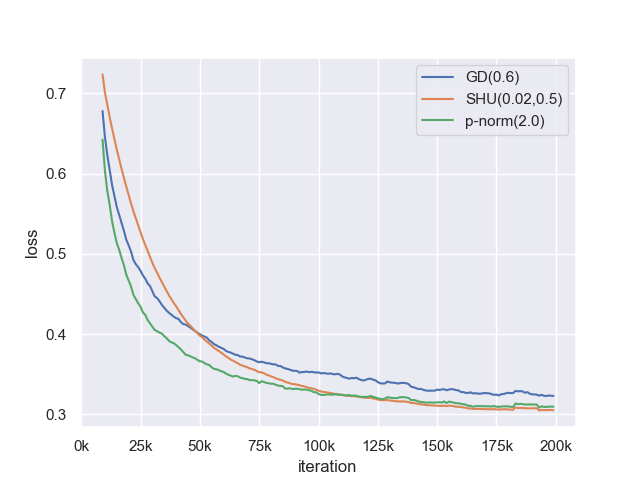}\quad
    \includegraphics[width=.25\linewidth]{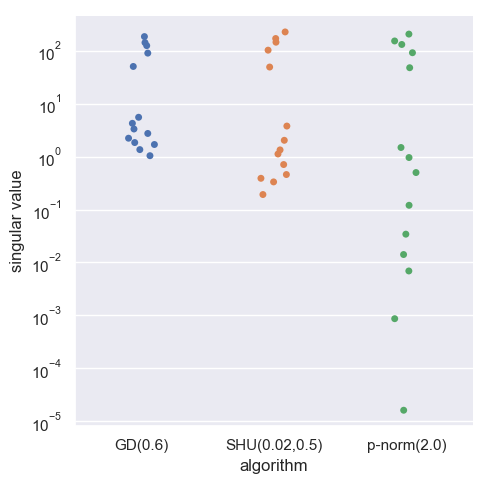}
    \caption{Unconstrained minimization of logistic loss.}
    \label{multiclass_unconstrained:fig}
\end{figure}
\begin{figure}[h]
    \includegraphics[width=.35\linewidth]{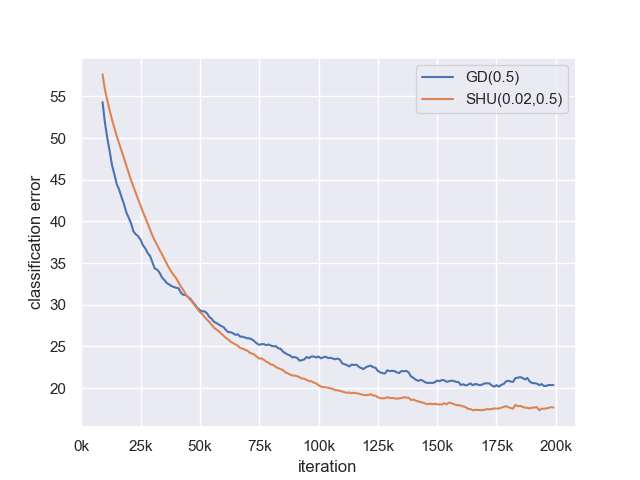}\quad
    \includegraphics[width=.35\linewidth]{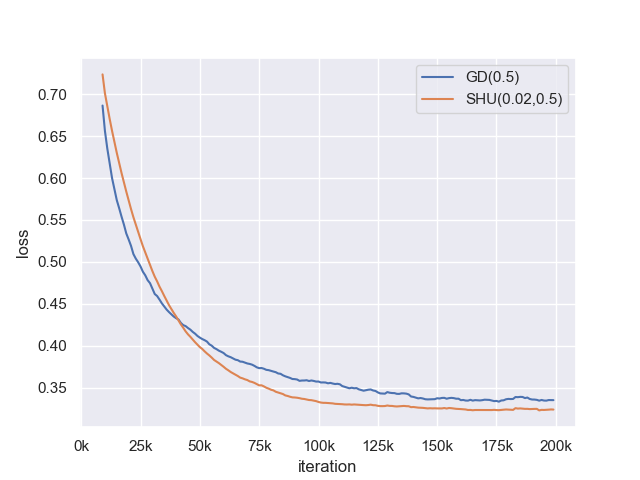}\quad
    \includegraphics[width=.25\linewidth]{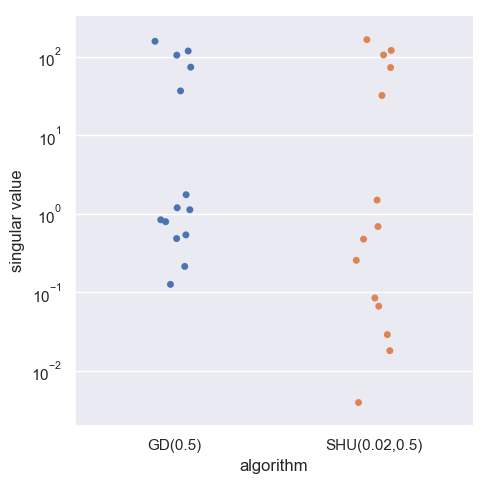}
    \caption{Minimization over the trace-norm ball of radius $500$. The $p$-norm algorithm
    is not included becaus ethe $p$-norm divergence does not have a closed form projection onto the $1$-ball.}
    \label{multiclass_constrained:fig}
\end{figure}

Without projection, $\SHU$ results in the largest trace norm
solution of norm $700$ whereas the $p$-norm algorithm and $\GD$ reach
solutions with trace norm just above $600$. Nevertheless, $\SHU$ attains the lowest
classification error and loss. Performance is noticeably better than gradient
descent. Moreover, the spurious singular values are typically smaller than that
of gradient descent. This pattern holds up in both settings.

With our new divergence, the $\SHU$ update looks exponential for
large singular values and linear for small ones. In this sense,
once gradients start accumulating in the directions that correspond
to the actual signal, these directions can be exploited exponentially.
On the other hand, the spurious directions are morally handled with
gradient descent with an effective learning rate $\eta\beta$. Note that
in our experiments, the $\SHU$ effective learning rate is smaller than
the $\GD$ learning rate by an order of magnitude. This may explain the
smaller magnitude in erroneous singular values. On the other hand,
the $p$-norm algorithm not only increases the magnitude of large
singular values, but also shrinks the magnitude of small singular values,
resulting in solutions that are closer to being low-rank. To see this,
note that the $p$-norm inverse mirror map has the form
\begin{align*}
    f(\bsigma)_i=\sigma_i^{p-1}/\|\bsigma\|^{p-2}_p
\end{align*}
for $p=2\ln(k)\approx 5.4$. Therefore, there is a natural normalization which
shrinks small singular values as good directions are exploited. Without
projection, this does not happen with $\SHU$. Informally speaking, the
following analogy applies to the three methods:
\begin{center}
    $\GD$: ~ The rich get richer! \\
    $\SHU$: ~ The rich get \emph{much} richer!! \\
    $p$-norm: ~ The rich get richer and the poor get poorer, oy!
\end{center}

Adding trace norm projection reduces
the magnitude of these singular values, but not to the level which the $p$-norm
algorithm can achieve. Overall, it appears that $\SHU$ may be
slightly more effective at reducing loss but the $p$-norm algorithm is more
effective at producing low rank solutions.

\subsection{Image Classification with Neural Networks.}
Loss minimization for neural networks is known to be nonconvex, thus the regret
bounds from this paper do not apply in this setting. Still, convex optimization
algorithms, such as AdaGrad, work well practice for training neural networks.
In this section, we use the unconstrained version of the HU to find the weights
of a simple neural network for image classification using the popular CIFAR10
dataset~\cite{CIFAR10}. SGD was used for comparison. Outside of use of the HU
algorithm, the design of the network and code are from the Tensorflow
tutorial on convolutional networks for image classification. The network
involves $2$ convolutional layers, max pooling, and $2$ fully connected layers,
all using ReLU activations. The loss function is the cross entropy loss plus a
$2$-norm regularization. For a complete description of the experimental setup
see~\cite{TFTUTORIAL, krizhevsky2009learning}.

Empirically, SGD with learning rate $\eta$ tended to perform similarly in terms
of training error to HU with equivalent effective learning rate $\beta\eta$ for
a range of values of $\beta$. In order to compare to SGD with learning rate
$\eta$, $\beta$ was varied and HU's learning rate was set to be
$\frac{\eta}{\beta}$ (in order to keep the effective learning rate invariant).
As can be seen from Figure~\ref{cifar10:fig}, the loss curves for a variety of
values of $\beta$ are very similar for $\beta\eta =0.005$, although the
smallest $\beta=0.1$ has slightly lower loss. In general, the final loss
reached is similar for a fixed effective learning rate. In addition, there is
a clear pattern indicating that shrinking $\beta$ results in sparser weights.
This may warrant further investigation.

\begin{figure}[h]
    \centerline{\hbox{
    \includegraphics[width=.4\linewidth]{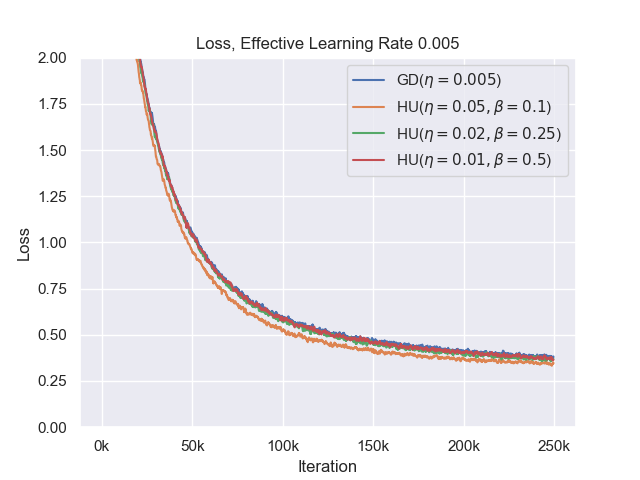}\quad
    \includegraphics[width=.4\linewidth]{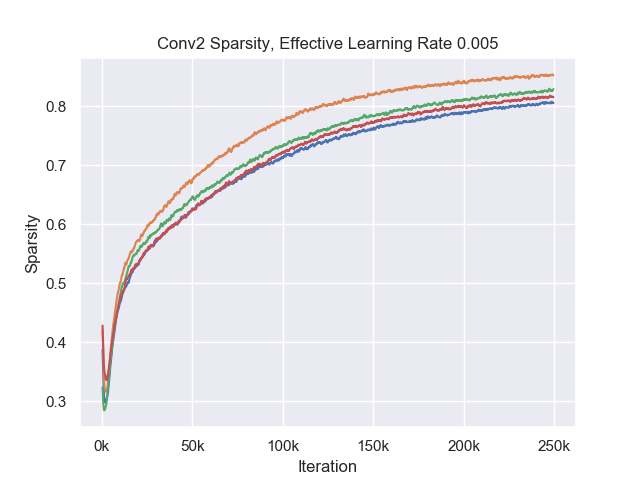}
    }}
    \caption{CIFAR10 loss and sparsity level. The effective learning rate was held
    constant at $0.005$ while $\beta$ is varied. Sparsity is displayed for the
    second convolutional layer conv2. The loss corresponds to the total loss,
    which includes regularization.}
    \label{cifar10:fig}
\end{figure}

    \section{Discussion}

\newcommand{\ignore}[1]{}

We examined a new regularization for online learning which interpolates between
multiplicative and additive updates through of a single parameter $\beta > 0$.
As $\beta \to \infty$, the algorithm approaches gradient descent while as
$\beta \to \frac{1}{d}$ it behaves similarly to the multiplicative update. The
spectral regularization provides a matrix analogue which is naturally
applicable to rectangular matrices. An interesting open direction is to devise
an self-tuning update for $\beta$ which is data dependent.

\ignore{
The first two theorems we describe provide bounds on the regret of the $\HU$
algorithm when costs are bounded in $\ell_2$ and $\ell_{\infty}$ norms
respectively. Theorem~\ref{l2regret:theorem} shows that when $\beta \geq 1$ the
regret is $O(G_2\sqrt{T})$, which is asymptotically the same as the regret of
$\GD$. When $\beta \rightarrow \infty$, the update converges exactly to that of
gradient descent, as expected. In the other extreme,
when $\beta = \Theta(\frac{1}{d})$, Thm.~\ref{l1regret:theorem} implies that
the regret is $O(G_{\infty}\sqrt{T\log{(d)}})$, which is asymptotically
equivalent to that of the multiplicative update.

To further distill these results, note that $\beta = \frac{1}{d}$ corresponds to
the $\EGpm$ algorithm with a different mechanism for reducing the size of
weights to satisfy an $1$-norm constraints. In particular, we can view $\EGpm$
as an \emph{adaptive} variant of the $\HU$ algorithm. In this interpretation,
instead of using hypentropy projection to reduce he size of weights, $\beta_t$
is decreased such that projections are not necessary. In this sense, $\EGpm$
behaves multiplicativly as gradients accumulate in certain directions, and
additively when gradients are noisier. In contrast, $\HU$ keep $\beta$ fixed
so the geometry of the hypentropy regularization remains intact. As an
interpolation between weight normalization and soft thresholding,
the hypentropy projection has the potential of explicitly producing sparse
weights. Analysis of the connections between $\EGpm$ and hypentropy projection
is given in App.~\ref{egpm_appendix:sec} and App.~\ref{proj:sec}.

Note that as $\beta$ shrinks towards $0$, the regret bounds eventually
deteriorate. With a sufficiently small $\beta$, a weight crossing $0$ can be
prohibitively expensive. In the $\EGpm$ setup, $\beta$ corresponds to
the size of an initial positive / negative weight. For $\beta$ arbitrarily
small, it may take arbitrarily many multiplicative updates in order to
reach reasonable weights.

Theorem~\ref{trace_normregret:theorem} provides a regret bound where the
decision set is a set of matrices with bounded trace norm. This constraint can
be used as a convex relaxation for low rank matrices~\cite{Fazel2002} We do
not provide regret bounds for matrix counterpart of the $2$-norm, namely,
matrices of bounded Frobenius norm, even though analogous results hold.
As the Frobenius norm is equivalent to the Euclidean norm of a flattened
$mn$ dimensional vector, the $\HU$ algorithm can be used verbatim
in this setting and is computationally less expensive.

We note that for rectangular matrices, Theorem~\ref{trace_normregret:theorem}
provides a regret bound that depends only on the minimum of $n$ and $m$. When
$\beta = \Theta({\tau}/{\min\{m,n\}})$, the regret is $O(\tau
G_{\infty}\sqrt{T\log(\min\{m,n\})})$.
}

    \acks{We thank Orestis Plevrakis for enlightening discussion.}

    %\newpage
    \bibliography{main}

    %Appendix sections
    \appendix
    \section{Online Mirrored Descent}\label{appendex1:sec}

\begin{algorithm2e}[h]
    \label{omd:algorithm}
    \SetAlgoLined
    \SetKw{KwBy}{by}
    \KwIn{$\eta >0, \beta > 0$, convex domain $\calK \subseteq \RR^{d}$}
    Let $\by_1$ be such that $\nabla R(\by_1) = 0$ and $\bw^{1} = \argmin_{\bw \in \calK} D_{R}\infdivx{\bw}{\by^1}$\;
    \For{$i=1$ \KwTo $T$}{
    (a) Predict $\bw^{t}$ ~ ~
    (b) Incur loss $\ell_t(\bW_t)$ ~ ~
    (c) Calculate $\bg^t =  \nabla \ell_t(\bw^t)$ \;
    \smallskip
    Update:
    $  \nabla R(\by^{t+1}) = \nabla R(\bw^t) - \eta\bg^t $\;
    \smallskip
    Project onto $\calK$:
    $\bw^{t+1} = \argmin_{\bw \in \calK} D_{R}\infdivx{\bw}{\by^{t+1}}$
    }
    \caption{Online Mirror Descent with potential function $R$}
\end{algorithm2e}

Online Mirror Descent (OMD) is an meta-algorithm for online convex
optimization. The regularization function, $R$, is assumed to be strongly convex,
smooth, and twice differentiable. Like GD, Mirror Descent is an iterative
algorithm involving a simple gradient update. $R$ defines a
mapping into a dual space where the updates occur, followed by an inverse
mapping to the original space. This step may result in a vector outside of
$\calK$, so a projection is required. An alternative formulation where the
regularization casts a trade-off between moving along the gradient direction
and staying close to the current iterate is,
\begin{align}
    \bw^{t+1} &= \argmin_{\bw \in \calK}\big\{\eta\ip{\bg^t}{\bw} +
    D_{R}\infdivx{\bw}{\bw^{t}}\big\} ~.
\end{align}

For Algorithm~\ref{omd:algorithm} with the above assumptions,
we have the following regret bound.
\begin{theorem}[OMD Regret]
    \label{omd:theorem_general}
    Assume $R$ is $\mu$-strongly convex with respect to a norm $\|\cdot\|$ whose
    dual is $\|\cdot\|_{*}$, then running \textup{OMD} with a fixed learning rate
    $\eta$ yields the following regret bound,
    \begin{align*}
        \regret_T \leq
        \frac{1}{\eta}\,{\sup_{\bw \in \calK} D_{R}\infdivx{\bw}{\bw^{1}}}
        + \frac{\eta}{2\mu} \sum_{t=1}^T \|\bg^t\|^2_{*} ~.
    \end{align*}
\end{theorem}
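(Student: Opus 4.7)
The plan is to follow the canonical online mirror descent analysis: linearize the losses, introduce Bregman divergences via the update rule, telescope a potential, and bound the per-round ``energy'' using the duality between strong convexity and smoothness.

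First I would invoke convexity of each $\ell_t$ to linearize:
\[
\ell_t(\bw^t) - \ell_t(\bw^*) \;\leq\; \langle \bg^t, \bw^t - \bw^* \rangle,
\]
so it suffices to bound $\sum_t \langle \bg^t, \bw^t - \bw^* \rangle$ for an arbitrary $\bw^* \in \calK$. The update rule rewrites as $\eta \bg^t = \nabla R(\bw^t) - \nabla R(\by^{t+1})$, which lets me apply the three-point identity for Bregman divergences,
\[
\langle \nabla R(\bw^t) - \nabla R(\by^{t+1}),\, \bw^t - \bw^* \rangle
\;=\; D_R\infdivx{\bw^*}{\bw^t} + D_R\infdivx{\bw^t}{\by^{t+1}} - D_R\infdivx{\bw^*}{\by^{t+1}}.
\]

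Next I would handle the projection step via the generalized Pythagorean inequality for Bregman projections: since $\bw^{t+1} = \arg\min_{\bw \in \calK} D_R\infdivx{\bw}{\by^{t+1}}$ and $\bw^* \in \calK$, we have $D_R\infdivx{\bw^*}{\by^{t+1}} \geq D_R\infdivx{\bw^*}{\bw^{t+1}}$. Substituting and summing over $t$, the pair $D_R\infdivx{\bw^*}{\bw^t} - D_R\infdivx{\bw^*}{\bw^{t+1}}$ telescopes into $D_R\infdivx{\bw^*}{\bw^1} - D_R\infdivx{\bw^*}{\bw^{T+1}} \leq \sup_{\bw\in\calK} D_R\infdivx{\bw}{\bw^1}$, which produces the first term of the claimed bound after dividing by $\eta$.

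The main technical step, and the one I would be most careful about, is bounding the per-round term $D_R\infdivx{\bw^t}{\by^{t+1}}$. Here I would use the conjugate identity $D_R\infdivx{\bx}{\by} = D_{R^*}\infdivx{\nabla R(\by)}{\nabla R(\bx)}$ together with the fact that $\mu$-strong convexity of $R$ with respect to $\|\cdot\|$ is equivalent to $\tfrac{1}{\mu}$-smoothness of $R^*$ with respect to the dual norm $\|\cdot\|_*$ (the dual statement of Lemma~\ref{second_strcvx:lemma} / Lemma~\ref{second_strsmooth:lemma}). This gives
\[
D_R\infdivx{\bw^t}{\by^{t+1}} \;\leq\; \frac{1}{2\mu}\,\|\nabla R(\bw^t) - \nabla R(\by^{t+1})\|_*^2 \;=\; \frac{\eta^2}{2\mu}\,\|\bg^t\|_*^2,
\]
where the last equality uses the update rule again. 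Summing over $t$ and dividing by $\eta$ yields the second term $\tfrac{\eta}{2\mu}\sum_t \|\bg^t\|_*^2$. Combining the two pieces gives the stated regret bound. The only subtlety worth double-checking is that $R$ is assumed closed and twice differentiable so that $\nabla R$ is a bijection onto its image and the conjugate duality used above is valid on the range of iterates.
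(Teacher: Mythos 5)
Your proposal is correct in its overall skeleton and matches the paper's decomposition up through the telescoping: linearization by convexity, the three-point identity, and the generalized Pythagorean inequality to replace $D_R\infdivx{\bw^*}{\by^{t+1}}$ by $D_R\infdivx{\bw^*}{\bw^{t+1}}$. Where you genuinely diverge is the per-round term. You discard the extra $D_R\infdivx{\bw^{t+1}}{\by^{t+1}}$ that the Pythagorean lemma provides and bound $D_R\infdivx{\bw^t}{\by^{t+1}}$ alone via the conjugate identity $D_R\infdivx{\bx}{\by} = D_{R^*}\infdivx{\nabla R(\by)}{\nabla R(\bx)}$ and $\tfrac{1}{\mu}$-smoothness of $R^*$; the paper instead keeps the difference $D_R\infdivx{\bw^t}{\by^{t+1}} - D_R\infdivx{\bw^{t+1}}{\by^{t+1}}$, expands both divergences so that $R(\by^{t+1})$ cancels, applies $\mu$-strong convexity between $\bw^t$ and $\bw^{t+1}$, and maximizes the resulting quadratic in $\|\bw^t-\bw^{t+1}\|$. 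Both routes yield $\tfrac{\eta^2}{2\mu}\|\bg^t\|_*^2$, but the paper's version is the safer one in this specific paper: it invokes strong convexity only at the pair $(\bw^t,\bw^{t+1})$, both of which lie in $\calK$, whereas your conjugate-smoothness step requires $R^*$ to be $\tfrac{1}{\mu}$-smooth on the segment joining $\nabla R(\bw^t)$ and $\nabla R(\by^{t+1})$ --- equivalently, $\mu$-strong convexity of $R$ in a region containing the unprojected point $\by^{t+1}$, which may leave $\calK$. Since Lemmas~\ref{strcvx_2:lemma}, \ref{strcvx_1:lemma} and Theorem~\ref{asymmetric_matrix_strcvx:theorem} establish strong convexity of $\phi_\beta$ only over the balls $B_2$, $B_1$, $\btr{\tau}$ (the modulus degrades as $\|\bx\|$ grows), your argument as stated would need either a global strong-convexity hypothesis or a separate bound keeping $\by^{t+1}$ in the good region. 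With that caveat addressed, your proof is a valid and standard alternative; what it buys is a cleaner one-line treatment of the energy term at the price of a stronger hypothesis on where $R$ is strongly convex.
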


If we have a bound on the dual norm of a gradient, we can choose a learning
rate which minimizes the upper bound, yielding Theorem~\ref{omd:theorem}. We
next introduce two well known properties of Bregman divergences without proof.
The first technical lemma is the Bregman divergence analogue of the law of
cosines.
\begin{lemma}[Three-point Lemma]
    \label{3pt:lemma}
    For every three vectors $\bx, \by, \bz$,
    \begin{align*}
        D_{R}\infdivx{\bx}{\bz} =
        D_{R}\infdivx{\bx}{\by} +
        D_{R}\infdivx{\by}{\bz} -
        \ip{\nabla R(\bz) - \nabla R(\by)}{\bx -\by} ~.
    \end{align*}
\end{lemma}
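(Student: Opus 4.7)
The plan is to prove the identity by direct algebraic expansion using only the definition of the Bregman divergence, $D_R(\ba \| \bb) = R(\ba) - R(\bb) - \langle \nabla R(\bb), \ba - \bb \rangle$. No convexity or smoothness hypothesis is needed; the statement is purely an identity in the symbols $R(\cdot)$ and $\nabla R(\cdot)$ evaluated at three points.

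First I would write out each of the three divergences appearing in the claim: $D_R(\bx \| \bz)$, $D_R(\bx \| \by)$, and $D_R(\by \| \bz)$. Then I would compute the right-hand side of the claimed equation by summing the expansions of $D_R(\bx \| \by)$ and $D_R(\by \| \bz)$ and subtracting $\langle \nabla R(\bz) - \nabla R(\by), \bx - \by \rangle$.

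The key step is to collect terms. Adding the expansions of $D_R(\bx \| \by)$ and $D_R(\by \| \bz)$ gives $R(\bx) - R(\bz) - \langle \nabla R(\by), \bx - \by\rangle - \langle \nabla R(\bz), \by - \bz \rangle$, since the $R(\by)$ terms telescope. Subtracting $\langle \nabla R(\bz), \bx - \by \rangle - \langle \nabla R(\by), \bx - \by \rangle$ cancels the $\langle \nabla R(\by), \bx - \by \rangle$ contribution entirely and combines the two inner products involving $\nabla R(\bz)$ into the single term $-\langle \nabla R(\bz), \bx - \bz\rangle$, via the identity $(\by - \bz) + (\bx - \by) = \bx - \bz$. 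What remains is exactly $R(\bx) - R(\bz) - \langle \nabla R(\bz), \bx - \bz\rangle = D_R(\bx \| \bz)$.

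There is no real obstacle here; the only thing to be careful about is bookkeeping of signs on the inner-product terms, since $\nabla R(\by)$ appears with opposite signs in $D_R(\bx \| \by)$ and in the correction term, and likewise for the two places $\nabla R(\bz)$ appears. A clean presentation simply lines up the three expansions vertically and points to the cancellation, making the proof only a few lines long.
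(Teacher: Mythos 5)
Your proof is correct. Note that the paper itself states this lemma \emph{without proof} (it is introduced as one of ``two well known properties of Bregman divergences without proof''), so there is no in-paper argument to compare against; your direct expansion from the definition $D_{R}\infdivx{\ba}{\bb} = R(\ba) - R(\bb) - \ip{\nabla R(\bb)}{\ba - \bb}$, with the telescoping of the $R(\by)$ terms and the recombination $(\by - \bz) + (\bx - \by) = \bx - \bz$ in the $\nabla R(\bz)$ inner products, is exactly the standard few-line verification and is valid as a pure algebraic identity requiring no convexity of $R$.
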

The next lemma is an analogue of the Pythagorean theorem for Bregman
projections.
\begin{lemma}[Generalized Pythagorean Theorem]
    \label{pythog:lemma}
    Let
    $$\bx'= \Pi_{\calK, \phi}(\bx) =
    \argmin_{\by \in \calK} D_{R}\infdivx{\by}{\bx} ~, $$
    then
    $$ D_{R}\infdivx{\bz}{\bx} \geq
    D_{R}\infdivx{\bz}{\bx'} + D_{R}\infdivx{\bx'}{\bx}~.$$
\end{lemma}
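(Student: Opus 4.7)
The plan is to combine first-order optimality of the projection $\bx'$ with the Three-point Lemma (Lemma~\ref{3pt:lemma}), which is already available in the excerpt. These are the two standard ingredients for the generalized Pythagorean inequality, and their interaction is exactly what converts the ``excess'' cross-term in the three-point identity into a nonpositive quantity.

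First I would set up the first-order optimality condition. Since $\bx'$ minimizes the convex function $\by \mapsto D_R\infdivx{\by}{\bx}$ over the convex set $\calK$, and $\bz \in \calK$, the standard variational inequality for convex minimization on a convex set gives $\langle \nabla_{\by} D_R\infdivx{\by}{\bx}\big|_{\by=\bx'},\, \bz - \bx' \rangle \geq 0$. Since $\nabla_{\by} D_R\infdivx{\by}{\bx} = \nabla R(\by) - \nabla R(\bx)$, this rewrites as
\begin{equation*}
\langle \nabla R(\bx') - \nabla R(\bx),\, \bz - \bx' \rangle \geq 0.
\end{equation*}

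Next I would invoke the Three-point Lemma with the triple $(\bz, \bx', \bx)$, which yields the identity
\begin{equation*}
D_R\infdivx{\bz}{\bx} = D_R\infdivx{\bz}{\bx'} + D_R\infdivx{\bx'}{\bx} - \langle \nabla R(\bx) - \nabla R(\bx'),\, \bz - \bx' \rangle.
\end{equation*}
The residual inner product is precisely $\langle \nabla R(\bx') - \nabla R(\bx), \bz - \bx' \rangle$ up to sign, so the optimality inequality above shows this residual is nonnegative, and the claimed inequality $D_R\infdivx{\bz}{\bx} \geq D_R\infdivx{\bz}{\bx'} + D_R\infdivx{\bx'}{\bx}$ follows immediately.

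There is no genuinely ``hard'' step here; the proof is essentially a two-line calculation once the right pieces are in place. The only subtlety worth being careful about is the first-order optimality condition: strictly speaking it is a subgradient inequality, but since $R$ is assumed twice differentiable the gradient is well-defined, and convexity of $\calK$ together with convexity of $D_R\infdivx{\cdot}{\bx}$ in its first argument (which follows from convexity of $R$) justifies the variational inequality along the feasible direction $\bz - \bx'$. Everything else is just bookkeeping with the Three-point Lemma.
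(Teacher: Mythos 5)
Your proof is correct; note that the paper states this lemma without proof (it is introduced as a ``well known property of Bregman divergences''), so there is no in-paper argument to compare against, but your route---first-order optimality of the Bregman projection combined with the Three-point Lemma applied to the triple $(\bz,\bx',\bx)$---is exactly the standard derivation and every step checks out. The only point worth making explicit is that the inequality requires $\bz\in\calK$ (otherwise the variational inequality $\ip{\nabla R(\bx')-\nabla R(\bx)}{\bz-\bx'}\geq 0$ need not hold); the lemma statement leaves this hypothesis implicit, and in the paper's application $\bz=\bw^{*}\in\calK$, so your assumption is the right one.
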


\begin{proof}[Theorem~\ref{omd:theorem_general}]
    Let $\bw^{*} = \argmin_{\bw\in \calK}\sum_{t=1}^T\ell_t(\bw)$ be
    the best fixed predictor in hindsight, then
    \begin{align*}
        \ell_t(\bw^t) - \ell_t(\bw^{*}) &\leq \ip{\bg^{t}}{\bw^t -\bw^{*}} & [\mbox{Convexity}] \nonumber\\
        &=\frac{1}{\eta}\ip{ \nabla R(\bw^t) - \nabla R(\by^{t+1})}{\bw^{t} -\bw^{*}}  \nonumber\\
        &=\frac{1}{\eta}\ip{  \nabla R(\by^{t+1})- \nabla R(\bw^t)}{\bw^{*}- \bw^{t}} \nonumber\\
        &=\frac{1}{\eta}\big ( D_{R}\infdivx{\bw^{*}}{\bw^t} + D_{R}\infdivx{\bw^t}{\by^{t+1}} - D_{R}\infdivx{\bw^{*}}{\by^{t+1}} \big)
        & [\mbox{Lemma}~ \ref{3pt:lemma}] \nonumber\\
        &=\frac{1}{\eta}\big ( D_{R}\infdivx{\bw^{*}}{\bw^t} + D_{R}\infdivx{\bw^t}{\by^{t+1}}
        - D_{R}\infdivx{\bw^{*}}{\bw^{t+1}} - D_{R}\infdivx{\bw^{t+1}}{\by^{t+1}}  \big )
        & [\mbox{Lemma}~ \ref{pythog:lemma}] \nonumber\\
        &=\frac{1}{\eta}\big ( D_{R}\infdivx{\bw^{*}}{\bw^t} - D_{R}\infdivx{\bw^{*}}{\bw^{t+1}}) +\frac{1}{\eta}\big ( D_{R}\infdivx{\bw^t}{\by^{t+1}}
        - D_{R}\infdivx{\bw^{t+1}}{\by^{t+1}}  \big )
        ~.
    \end{align*}
    Note that the left hand side term telescopes when summing over $t$, yielding
    \begin{align}
        \label{md_regret_sum:eq}
        \sum_{t=1}^T \ell_t(\bw^t) - \ell_t(\bw^{*}) \leq
        \frac{D_{R}\infdivx{\bw^{*}}{\bw^1}}{\eta} +
        \frac{1}{\eta}\sum_{t=1}^T \Big(D_{R}\infdivx{\bw^t}{\by^{t+1}}  - D_{R}\infdivx{\bw^{t+1}}{\by^{t+1}}\Big)~.
    \end{align}
    If suffices to upper bound $D_{R}\infdivx{\bw^t}{\by^{t+1}} - D_{R}\infdivx{\bw^{t+1}}{\by^{t+1}}$.
    We start by substituting the definition of the Bregman divergence in $D_R$,
    \begin{align*}
        D_{R}\infdivx{\bw^t}{\by^{t+1}}  &- D_{R}\infdivx{\bw^{t+1}}{\by^{t+1}}\\
        &= R(\bw^t) -R(\bw^{t+1}) - \ip{\nabla R(\by^{t+1})}{\bw^t-\bw^{t+1}}\\
        &\leq\ip{\nabla R(\bw^{t})}{\bw^t-\bw^{t+1}} - \frac{\mu}{2}\|\bw^t - \bw^{t+1}\|^2 - \ip{\nabla R(\by^{t+1})}{\bw^t-\bw^{t+1}} &[\mu\mbox{-strong convexity}]\\
        &= \ip{\nabla R(\bw^{t}) -\nabla R(\by^{t+1})}{\bw^t-\bw^{t+1}} - \frac{\mu}{2}\|\bw^t - \bw^{t+1}\|^2\\
        &= \eta\,\ip{\bg^t}{\bw^t-\bw^{t+1}} - \frac{\mu}{2}\|\bw^t - \bw^{t+1}\|^2 &[\mbox{Update rule}]\\
        &\leq \eta\,\|\bg^t\|_{*}\,\|\bw^t-\bw^{t+1}\| - \frac{\mu}{2}\|\bw^t - \bw^{t+1}\|^2 &[\mbox{ Cauchy-Schwarz}]\\
        &\leq \frac{\eta^2\|\bg^t\|^2_{*}}{2\mu} ~.
    \end{align*}
    The last step follows from maximizing the quadratic function in $\|\bw^t - \bw^{t+1}\|$.
    Using the above bound \eqref{md_regret_sum:eq} completes the proof.
\end{proof}

    \section{Connections to EG$\pm$} \label{egpm_appendix:sec}
The $\EGpm$ algorithm \cite{kivinen1997exponentiated} maintains two
vectors $\bu$ and $\bv$ such that, $\bw=\bu-\bv$. The two vectors
are updated over $\RR^{2d}_+$ using the $\EG$ algorithm. We consider here
a variant where the $2d$ dimensional weights are normalized such that
their sum is $\beta d$. Typically, we would have $\beta = \frac{1}{2d}$,
so $(\bu, \bv)$ lie on a unit simplex.

%%%%%%%%%%%%%%%%%%%%%%%%%%%%%%%%%%%%%%%%%%%%%%%%%%%%%%%%%%%%%%%%%%%%%%%%%%%%%%%
\begin{algorithm2e}[t]
    \label{egpm:algorithm}
    \SetAlgoLined
    \SetKw{KwBy}{by}
    \KwIn{$\eta >0, \beta > 0$}
    Initialize: $\bu^{1}_i= \bv^{1}_i = \frac{\beta}{2}$,~ $\barg^0 = \bzero$\;
    \For{$t=1$ \KwTo $T$}{
    (a) Predict $\bw^{t}=\bu^{t} - \bv^{t}$ ~ ~
    (b) Incur loss $\ell_t(\bw_t)$ ~ ~
    (c) Calculate $\barg^t =  \barg^{t-1} + \nabla \ell_t(\bw^t)$ \;
    \smallskip
    Update:
    % $ \begin{align}\label{SHU_update:eq}
    $  u^{t+\frac{1}{2}}_i = u^{t}_i \exp(-\eta g^{t}_i)$
    ~ ~and~ ~
    $ v^{t+\frac{1}{2}}_i = v^{t}_i \exp(\eta g^{t}_i)$\;
    % \end{align}
    Normalize weights:
    % \begin{align*}
    $
    (\bu^{t+1},\bv^{t+1}) =
    {\beta d} \,
    \Bigg({\sum_{i=1}^d u^{t+\frac{1}{2}}_i + v^{t+\frac{1}{2}}_i}\Bigg)^{-1}
    \!\! (\bu^{t+\frac{1}{2}},\bv^{t+\frac{1}{2}})
    $
    % \end{align*}
    }
    \caption{$\EGpm$}
\end{algorithm2e}
%%%%%%%%%%%%%%%%%%%%%%%%%%%%%%%%%%%%%%%%%%%%%%%%%%%%%%%%%%%%%%%%%%%%%%%%%%%%%%%

We now show that the $\EGpm$ algorithm can be viewed as an adaptive variant of $\HU$
with the update,
\begin{align}
    \label{ahu:eq}
    \by^{t+1} = (\nabla \phi_{\beta_t})^{-1}(\nabla \phi_{\beta_{t-1}}(\bw^{t}) - \eta \bg^{t}) ~.
\end{align}

The weight, $\bw^{t+1}$ is then hypentropy projection of $\by^{t+1}$ onto the
contraint set. In this adaptive update, $\nabla \phi_{\beta_{t-1}}$ is used to
map into the dual space where a gradient update occurs. Afterwards, $(\nabla
\phi)^{-1}_{\beta_{t}}$ maps back to the primal. When used in an OCO setting
over the norm-$1$ ball, $\beta_t$ can always be chosen to be sufficiently
small such that projection step is voided. $\EGpm$ fits into this algorithmic
paradigm with a specific choice of $\beta_t$ that avoids hypentropy
projection. In the setting of OCO over the norm-$1$ ball of radius $\beta d$,
we have the following result.
\begin{theorem}
    \label{HU-EGPM:theorem}
    $\EGpm$ with learning rate $\eta$ is equivalent to the adaptive $\HU$
    algorithm described in \eqref{ahu:eq} with the same learning rate and
    $\beta_t = {\beta d}\left({\sum_{i=1}^d\cosh(\eta \barg^t_i)}\right)^{-1}$,
    where $\barg^t = \sum_{s=1}^t\bg^t$.
\end{theorem}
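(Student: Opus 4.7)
The plan is to verify the claim by computing a closed form for the $\EGpm$ iterate $\bw^t$ and showing it matches the adaptive $\HU$ recursion \eqref{ahu:eq} with the prescribed schedule $\beta_t$. The key observation is that $\nabla \phi_\beta(x) = \arcsinh(x/\beta)$ and $(\nabla \phi_\beta)^{-1}(z) = \beta \sinh(z)$, so the adaptive update simplifies to $y_i^{t+1} = \beta_t \sinh\bigl(\arcsinh(w_i^t/\beta_{t-1}) - \eta g_i^t\bigr)$. If we can show that $w_i^t = -\beta_{t-1}\sinh(\eta \bar g_i^{t-1})$ for $\EGpm$, the $\arcsinh$ collapses nicely and the induction goes through.

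First I would unroll the $\EGpm$ multiplicative updates. Since each step of $\EGpm$ only rescales $(\bu,\bv)$ by a global constant after an entry-wise $\exp$ update, an easy induction shows that at time $t$, $u_i^t = C_t \exp(-\eta \bar g_i^{t-1})$ and $v_i^t = C_t \exp(\eta \bar g_i^{t-1})$ for some scalar $C_t$ chosen so that $\sum_i(u_i^t + v_i^t) = \beta d$. This forces $C_t = \beta d / \bigl(2\sum_j \cosh(\eta \bar g_j^{t-1})\bigr) = \beta_{t-1}/2$, and therefore
\begin{equation*}
w_i^t = u_i^t - v_i^t = -2C_t \sinh(\eta \bar g_i^{t-1}) = -\beta_{t-1}\sinh(\eta \bar g_i^{t-1}).
\end{equation*}
As a free byproduct, $\|\bw^t\|_1 \le \sum_i (u_i^t + v_i^t) = \beta d$, so $\bw^t$ lies in the $\ell_1$-ball of radius $\beta d$ and the hypentropy projection in the adaptive $\HU$ algorithm is indeed vacuous, matching the setup described in the paragraph preceding the theorem.

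Next I would substitute this closed form into the adaptive $\HU$ update. Using $\arcsinh(-\sinh(\eta\bar g_i^{t-1})) = -\eta \bar g_i^{t-1}$, the right-hand side of \eqref{ahu:eq} becomes
\begin{equation*}
y_i^{t+1} = \beta_t \sinh\bigl(-\eta \bar g_i^{t-1} - \eta g_i^t\bigr) = -\beta_t \sinh(\eta \bar g_i^t),
\end{equation*}
which, by the same computation applied at time $t+1$ on the $\EGpm$ side, equals $w_i^{t+1}$. Combined with the base case $\bw^1 = \bu^1 - \bv^1 = \bzero$ (which matches the adaptive $\HU$ initialization since $\bar g^0 = \bzero$ gives $\sinh(\eta \bar g^0_i) = 0$), this closes the induction.

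There is no real obstacle here; the argument is essentially a direct algebraic identification exploiting that $\sinh$ and $\arcsinh$ invert each other and that $\EGpm$'s normalization constant coincides with $\beta_t$ up to a factor of $2$. The one subtle point worth flagging is to double-check that the initialization of \eqref{ahu:eq} is consistent: at $t=0$ we need $\nabla \phi_{\beta_0}(\bw^1)$ to be well-defined, and since $\bw^1 = \bzero$ this holds for any $\beta_0 > 0$, with $\beta_0 = \beta d / \sum_i \cosh(0) = \beta$ as dictated by the stated formula.
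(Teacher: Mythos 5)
Your proposal is correct and follows essentially the same route as the paper's own proof: unroll the $\EGpm$ recursion to the closed form $w_i^t = -\beta_{t-1}\sinh(\eta \barg_i^{t-1})$, observe that the $\arcsinh$/$\sinh$ pair collapses in the adaptive $\HU$ update, and note that the projection is vacuous (you bound $\|\bw^t\|_1$ by $\sum_i(u_i^t+v_i^t)=\beta d$ where the paper invokes $|\sinh(x)|\leq\cosh(x)$, which is the same fact). Your explicit check of the initialization and of the normalization constant $C_t=\beta_{t-1}/2$ is a welcome bit of extra care.
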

\begin{proof}
    We start with some analysis of $\EGpm$. We have
    $u^{t+1}_i \propto \exp(-\eta \barg^t_i)$ and similarly
    $v^{t+1}_i \propto \exp(-\eta \barg^t_i)$. Normalizing the two such that
    $\|(\bu^{t+1}, \bv^{t+1})\|_1 = \beta d$ yields the normalization factor,
    \begin{align*}
        \frac{\beta d}{\sum_{i=1}^d\exp(-\eta \barg^t_i) +\exp(\eta \barg^t_i)} =
        \frac{2\beta d}{\sum_{i=1}^d\cosh(\eta \barg^t_i)}
    \end{align*}
    Putting the above all together, we get
    \begin{align}
        w^{t+1}_i = u^{t+1} - v^{t+1}
        &= \frac{2\beta d}{\sum_{i=1}^d\cosh(\eta \barg^t_i)}
        \big(\exp(-\eta \barg^t_i) - \exp(\eta \barg^t_i) \big) \nonumber \\
        &= \frac{\beta d \sinh(-\eta \barg^t_i)}{\sum_{i=1}^d\cosh(\eta \barg^t_i)} \label{expanded_egpm:eq}\\
        &= \beta_t \sinh(-\eta \barg^t_i) ~. \nonumber
    \end{align}
    Therefore, we have $\bw^{t+1} = \nabla \phi^{-1}_{\beta_t}(-\eta \barg^t)$.
    We can now show that the adaptive $\HU$ algorithm described in \eqref{ahu:eq}
    results in the same weights. We prove this property by induction on $t$. The
    base case follows because we initialize $\bw^{0} = 0$ in $\HU$. Now we assume
    that $\bw^{t} =  \phi^{-1}_{\beta_{t-1}}(-\eta \barg^{t-1})$. Applying the
    hypentropy update, we have
    \begin{align*}
        \by^{t+1} = \nabla \phi_{\beta_t}^{-1}(\nabla
        \phi_{\beta_{t-1}}(\phi^{-1}_{\beta_{t-1}}(-\eta \barg^{t-1})) - \eta
        \bg^{t}) = \nabla \phi_{\beta_t}^{-1}(-\eta \barg^{t-1} - \eta \bg^{t})
        = \nabla \phi^{-1}_{\beta_t}(-\eta \barg^t)~.
    \end{align*}
    Now note that since $\forall x \in \RR, |\sinh(x)| \leq \cosh(x)$, projection
    never takes place, so $\bw^{t+1} = \by^{t+1}$.
\end{proof}

We also find it useful to consider these updates without normalization or
projection. Without any constraint, the regularization parameter $\beta$ does
need to change.
\begin{theorem}
    \label{HU-EGPM-unnormalized:theorem}
    Running $\HU$ with a learning rate $\eta$ and a regularization parameter $\beta$
    without projection is equivalent to running $\EGpm$ without normalization.
\end{theorem}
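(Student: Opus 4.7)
The plan is to give a direct induction on $t$, in the same spirit as the proof of Theorem~\ref{HU-EGPM:theorem}, but simpler because without normalization the adaptive parameter $\beta_t$ is no longer needed. First I would unroll the unnormalized EG$\pm$ updates in closed form: starting from $u^1_i = v^1_i = \beta/2$, the purely multiplicative rule gives
\begin{align*}
u^{t+1}_i = \tfrac{\beta}{2}\exp(-\eta \bar g^t_i), \qquad v^{t+1}_i = \tfrac{\beta}{2}\exp(\eta \bar g^t_i),
\end{align*}
so that $w^{t+1}_i = u^{t+1}_i - v^{t+1}_i = -\beta \sinh(\eta \bar g^t_i)$ using the identity $\tfrac{1}{2}(e^{-a}-e^{a}) = -\sinh(a)$. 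This gives a clean closed form for the EG$\pm$ iterates in terms of the accumulated gradient $\bar g^t = \sum_{s=1}^t \bg^s$.

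Next I would exhibit the same closed form for the unprojected HU iterates. Since HU uses the inverse mirror map $(\nabla \phi_\beta)^{-1}(z) = \beta \sinh(z)$, the unprojected update reads $\bw^{t+1} = \beta \sinh(\arcsinh(\bw^t/\beta) - \eta \bg^t)$. The base case is $\bw^1 = \bzero = -\beta\sinh(\eta \bar g^0)$. For the inductive step, assume $\bw^t = -\beta \sinh(\eta \bar g^{t-1})$, so that coordinatewise $\arcsinh(w^t_i/\beta) = -\eta \bar g^{t-1}_i$; then
\begin{align*}
w^{t+1}_i = \beta \sinh(-\eta \bar g^{t-1}_i - \eta g^t_i) = -\beta \sinh(\eta \bar g^t_i),
\end{align*}
which matches the EG$\pm$ expression derived above. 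Because no projection is performed, no case analysis is needed to handle the constraint set.

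The only subtle point is ensuring that the initialization lines up: HU initializes $\bw^1 = \bzero$ whereas EG$\pm$ initializes $(\bu^1,\bv^1) = (\tfrac{\beta}{2}\bone, \tfrac{\beta}{2}\bone)$, but of course $u^1_i - v^1_i = 0$, so both algorithms agree at $t=1$ and the induction goes through. Beyond this, the argument is essentially a routine algebraic identity; the main ``obstacle'' is just to observe that the adaptive factor $\beta_t$ collapses to the fixed $\beta$ precisely when the EG$\pm$ normalization step is dropped, so the adaptive-HU reduction of Theorem~\ref{HU-EGPM:theorem} specializes to ordinary HU in this case.
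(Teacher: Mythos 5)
Your proof is correct, but it takes a different route from the paper's. You unroll the unnormalized EG$\pm$ recursion into the global closed form $w^{t+1}_i = -\beta\sinh(\eta\bar g^t_i)$ and match it against the unprojected HU iterates by induction on $t$ --- essentially specializing the argument the paper uses for Theorem~\ref{HU-EGPM:theorem} (the normalized case) and observing that $\beta_t$ collapses to $\beta$. The paper instead argues purely locally: it notes the invariant $u^t_i v^t_i = \beta^2/4$ preserved by the unnormalized multiplicative update, solves the quadratic $w = u - \beta^2/(4u)$ to get $u = \tfrac{1}{2}(w + \sqrt{w^2+\beta^2})$, and then shows via the hyperbolic addition formula that both one-step maps equal $w^{t+1}_i = \sinh(-\eta g^t_i)\sqrt{(w^t_i)^2+\beta^2} + \cosh(-\eta g^t_i)\,w^t_i$, with no induction over the horizon. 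Your version buys a cleaner FTRL-style statement (the iterate depends only on the cumulative gradient $\bar g^t$), while the paper's buys the product invariant and an explicit one-step update formula. One small point of care in your write-up: the closed forms for the two algorithms are stated in terms of a common gradient sequence $\bar g^t$, but the gradients are evaluated at each algorithm's own iterates, so strictly the induction must establish $\bw^t_{\mathrm{HU}} = \bw^t_{\EGpm}$ at each step (hence equal gradients) before the next step's closed forms can be identified; your induction does accomplish this, but it would be worth making explicit that gradient equality is part of the inductive hypothesis rather than an external assumption.
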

\begin{proof}
    Note that in $\EGpm$ without normalization we get,
    \begin{align*}
        u^{t+1}_iv^{t+1}_i =
        u^{t}_i \exp(-\eta g^{t}_i)v^{t}_i \exp(\eta g^{t}_i) =
        u^{t}_iv^{t}_i ~.
    \end{align*}
    Therefore, $u^{t}_iv^{t}_i$ remains fixed and due to the initialization
    $\forall i , u^{0}_iv^{0}_i = \frac{\beta^2}{4}$. This inverse relationship
    between $\bu$ and $\bv$ can be used to find a simple closed form solution for
    $\bw$ by solving a quadratic equation. In particular, we know $u > 0$, so we have
    \begin{align*}
        w = u - v
        = u - \dfrac{\beta^2}{4u} ~\Rightarrow ~
        u = \dfrac{w + \sqrt{w^2 + \beta^2}}{2} ~.
    \end{align*}
    The resulting final update is
    \begin{align*}
        w^{t+1}_i &= \dfrac{\sqrt{(w^{t}_i)^2 +\beta^2} + w^{t}_i}{2}
        \exp(- \eta g^{t}_i) - \dfrac{\sqrt{(w^{t}_i)^2 +\beta^2} - w^{t}_i}{2}
        \exp(\eta g^{t}_i)\\
        &= \sqrt{(w^{t}_i)^2 +\beta^2} \dfrac{\exp(- \eta g^{t}_i) - \exp(\eta g^{t}_i)}{2}
        + w^{t}_i\dfrac{\exp(- \eta g^{t}_i) + \exp(\eta g^{t}_i)}{2} \\
        &= \sinh(- \eta g^{t}_i)\sqrt{(w^{t}_i)^2 +\beta^2} + \cosh(- \eta g^{t}_i)w^{t}_i ~.
    \end{align*}
    Now we consider HU algorithm with the same parameters,
    \begin{align*}
        \bw^{t+1} &= \nabla \phi_{\beta}^{-1}(\nabla \phi_{\beta}(\bw^{t}) - \eta \bg^{t})
        & \big[\mbox{Algorithm~\ref{hu:algorithm} update}\big]\\
        ~ ~ ~\Rightarrow w^{t+1}_i &= \beta\sinh\Big(\arcsinh\Big(\frac{w^{t}_i}{\beta}\Big)- \eta g^{t}_i \Big)\\
        &= \beta \Big[\sinh\Big(\arcsinh\Big(\frac{w^{t}_i}{\beta}\Big)\Big) \cosh(- \eta g^{t}_i)
        + \cosh\Big(\arcsinh\Big(\frac{w^{t}_i}{\beta}\Big)\Big)\sinh(- \eta g^{t}_i)\Big]\\
        &= \sinh(- \eta g^{t}_i)\sqrt{(w^{t}_i)^2 +\beta^2} + \cosh(- \eta g^{t}_i)w^{t}_i ~.
    \end{align*}
    Thus indeed the two updates with the conditions stated in the theorem are
    equivalent.
\end{proof}

\paragraph{Discussion}

While, we can represent $\EGpm$ as an adaptive variant of $\HU$, we still
would like to understand how $\HU$ with a fixed $\beta$ relates to $\EGpm$. A
brief look into $\beta_t$ provides some intuition that the two still should be
similar updates. Note that for small $\eta$,
$\cosh(\eta\barg^t_i)=1+O(\eta^2)$ and $\beta_t \approx \beta$.
In this regime, a fixed $\beta$ should result in a similar update.
The relation to $\EGpm$ also motivates the choice of $\beta\approx\tfrac{\|\bw^{\star}\|}{d}$
as this provides the right scale for $\EGpm$.
Another takeaway from Theorem\ref{HU-EGPM:theorem} is that the $\EGpm$ algorithm
viewed without doubling has an update that looks very much like RFTL.
For simplicity, let $\beta =
\tfrac{1}{d}$. We see from \eqref{expanded_egpm:eq} that the weights follow $
\bw^t=\nabla\phi^{*}(-\eta \barg^t)$ where
$\phi^{*}(\bx) = \log(\sum_{i=1}^d \cosh(x_i))$.

\end{document}